\theoremstyle{plain}
\newtheorem{problem}{Problem}
\newtheorem{lemma}{Lemma}
\newtheorem{definition}{Definition}
\newtheorem{assumption}{Assumption}
\newcommand\preitem{\mdseries\textbullet\space}
\newlist{desclist}{description}{3}
\setlist[desclist,1]{format=\preitem\bfseries,leftmargin=\widthof{\preitem},style=sameline}
\newcommand\latinabbrev[1]{
  \peek_meaning:NTF . {
    #1\@}%
  { \peek_catcode:NTF a {
      #1.\@ }%
    {#1.\@}}}
\def\eg{\latinabbrev{e.g}}
\def\etal{\latinabbrev{et al}}
\def\ie{\latinabbrev{i.e}}
\newcommand{\fname}{{\textsc{EvoluNet}}}
\begin{document}

\twocolumn[
\icmltitle{EvoluNet: Advancing Dynamic Non-IID Transfer Learning on Graphs}



\icmlsetsymbol{equal}{*}

\begin{icmlauthorlist}
\icmlauthor{Haohui Wang}{VT}
\icmlauthor{Yuzhen Mao}{VT}
\icmlauthor{Yujun Yan}{Dartmouth}
\icmlauthor{Yaoqing Yang}{Dartmouth}
\icmlauthor{Jianhui Sun}{VT}
\icmlauthor{Kevin Choi}{Deloitte}
\icmlauthor{Balaji Veeramani}{Deloitte}
\icmlauthor{Alison Hu}{Deloitte}
\icmlauthor{Edward Bowen}{Deloitte}
\icmlauthor{Tyler Cody}{VTNSI}
\icmlauthor{Dawei Zhou}{VT}
\end{icmlauthorlist}

\icmlaffiliation{VT}{Department of Computer Science, Virginia Tech, Blacksburg, VA, USA.}
\icmlaffiliation{VTNSI}{Virginia Tech National Security Institute, Arlington, VA, USA}
\icmlaffiliation{Dartmouth}{Department of Computer Science, Dartmouth College, Hanover, NH, USA.}
\icmlaffiliation{Deloitte}{Deloitte \& Touche LLP, USA.}

\icmlcorrespondingauthor{Dawei Zhou}{zhoud@vt.edu}

\icmlkeywords{Machine Learning, ICML}

\vskip 0.3in
]



\printAffiliationsAndNotice{}  

\begin{abstract}
Non-IID transfer learning on graphs is crucial in many high-stakes domains. The majority of existing works assume stationary distribution for both source and target domains. However, real-world graphs are intrinsically dynamic, presenting challenges in terms of domain evolution and dynamic discrepancy between source and target domains. 
To bridge the gap, we shift the problem to the dynamic setting and pose the question: {given the \emph{label-rich} source graphs and the \emph{label-scarce} target graphs both observed in previous $T$ timestamps, how can we effectively characterize the evolving domain discrepancy and optimize the generalization performance of the target domain at the incoming $T+1$ timestamp?}
To answer it, we propose a generalization bound for \emph{dynamic non-IID transfer learning on graphs}, which implies the generalization performance is dominated by domain evolution and domain discrepancy between source and target graphs. 
Inspired by the theoretical results, we introduce a novel generic framework named \fname. It leverages a transformer-based temporal encoding module to model temporal information of the evolving domains and then uses a dynamic domain unification module to efficiently learn domain-invariant representations across the source and target domains. 
Finally, \fname\ outperforms the state-of-the-art models by up to 12.1\%, demonstrating its effectiveness in transferring knowledge from dynamic source graphs to dynamic target graphs. 
\end{abstract}

\section{Introduction}
The recent decade has witnessed notable achievements in machine learning. Despite the exciting achievements, whether the learned model could deliver its promise in real-world scenarios heavily depends on abundant and high-quality training data.
Nevertheless, the data annotation process, which requires domain-specific knowledge from human annotators, is often a costly and time-intensive endeavor~\cite{Fang2021Molecular, Cui2022Interpretable,DBLP:conf/cikm/ZhouZF0H22}. Transfer learning has emerged as a promising tool, which aims to improve the generalization performance of the target domain with little or no labeled data by leveraging knowledge from the source domain with adequate labeled data~\cite{Tripuraneni20Theory, Wang19Characterizing, Ganin16Domain, Ben06Advances}. 
However, a majority of the existing work~\cite{Ben10Theory, zhao2019learning, Wang22Understanding} hold the assumption that data is static and independent and identically distributed (IID), as shown in the bottom-left of Figure~\ref{fig:paradigm}.
When extending transfer learning to graph-structured data, particular challenges are posed due to the non-IID nature of graphs, i.e., samples on graphs (e.g., nodes, edges, subgraphs) are naturally connected with their neighbors in certain ways~\cite{xie2021federated}.

\begin{figure}[t]
  \centering
  \includegraphics[width=1.0\linewidth]{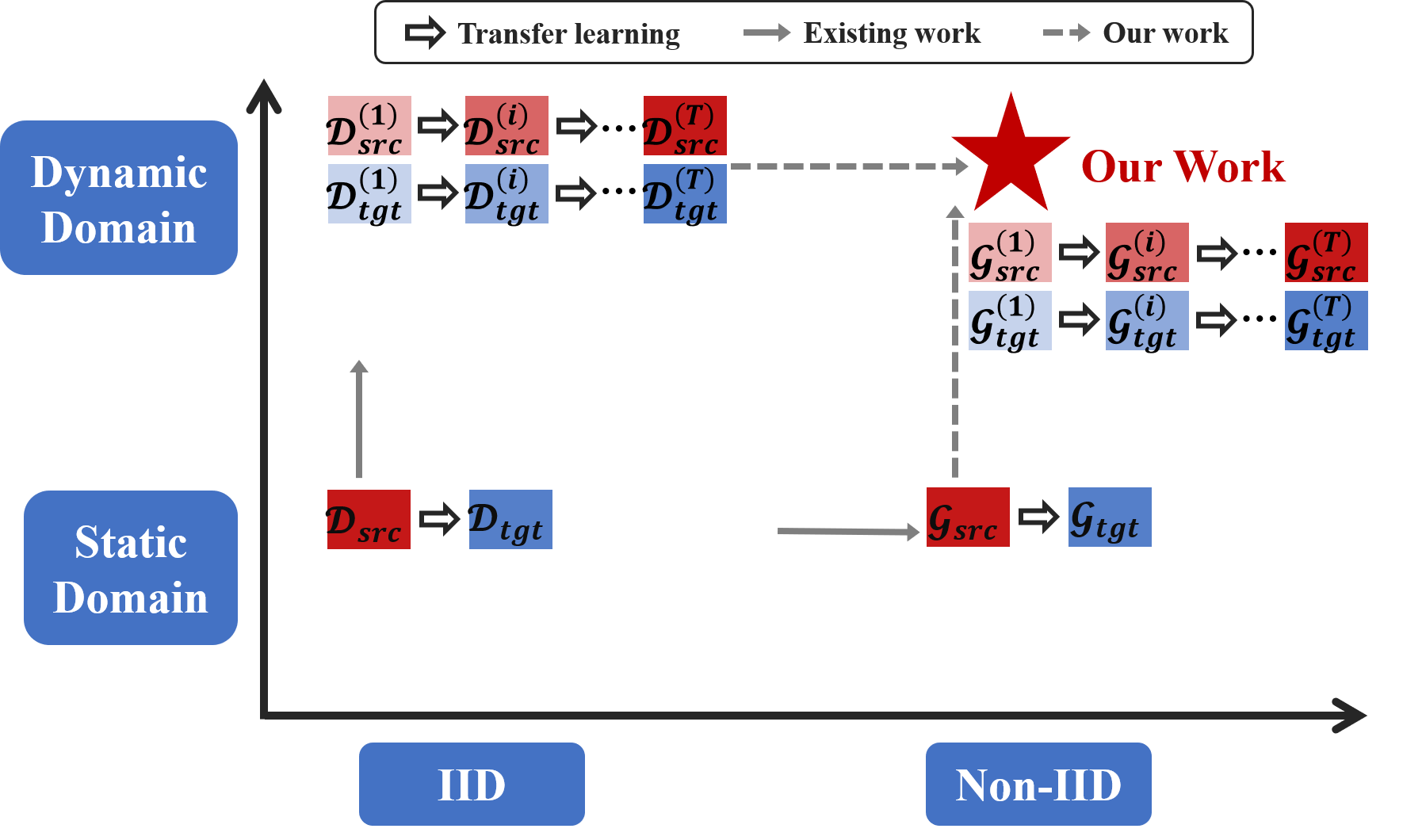}
  \begin{footnotesize}
  \put(-181,30){\cite{Ben10Theory}}
  \put(-181,40){\cite{Wang22Understanding}}
  \put(-80,40){\cite{wu2023non}}
  \put(-80,30){\cite{zhu2021transfer}}
  \put(-181,95){\cite{Wu22Unified}}
  \put(-181,85){\cite{Liu20Learning}}
  \end{footnotesize}
  \caption{A paradigm shift to the dynamic non-IID transfer learning. $\mathcal{D}$ denotes IID domain; $\mathcal{G}$ denotes non-IID graph domain. Subscript $src$ and $tgt$ denote source and target, and superscript $(i)$ represents the $i^\text{th}$ timestamp.}
  \label{fig:paradigm}
\end{figure}

While recent research efforts have delved into non-IID transfer learning on graphs~\cite{wu2023non, zhu2021transfer, hu2020graph, wu2020unsupervised, shen2020adversarial} as shown in the bottom-right of Figure~\ref{fig:paradigm}, the most of them have overlooked the dynamics inherent in realistic systems, where graphs in both source and target domains evolve over time. 
Directly applying existing static works on graphs to the dynamic setting may lead to a sub-optimal performance due to the unexplored temporal information and evolving distribution discrepancy~\cite{Greene10Tracking,de2014graph,Pareja20EvolveGCN,Song19Session,DBLP:conf/cikm/FuXLTH20,DBLP:conf/kdd/ZhouZ0H20}. 
Therefore, our paper proposes a paradigm shift in Figure~\ref{fig:paradigm} towards the dynamic non-IID setting, by introducing the novel problem termed as \emph{dynamic non-IID transfer learning on graphs}. In particular, given the \emph{label-rich} source graphs and the \emph{label-scarce} target graphs observed in previous $T$ timestamps, how can we effectively characterize the evolving domain discrepancy and optimize the generalization performance of the target graph at the incoming $T+1$ timestamp?

Despite the key importance, there exist three pivotal challenges in our problem setting.
\emph{C1. Generalization Bound}: There is limited theoretical analysis on how the domain discrepancy would accumulate across time and how it will affect the model performance. 
Carrying out theoretical analysis on the generalization bound would be crucial for understanding dynamic non-IID transfer learning on graphs. 
\emph{C2. Computational Framework}: How can we develop a computational framework to characterize the evolving domain discrepancy and capture the domain-invariant information when the source and target graphs exhibit distinct distributions over time? 
\emph{C3. Benchmark}: 
As there is little existing literature on dynamic non-IID transfer learning on graphs, it is essential to point out a set of benchmark datasets and baselines for algorithm development and evaluation. 

In this paper, we make the first attempt to derive a generalization bound for dynamic transfer learning on graphs. The theoretical findings illustrate that the generalization performance is dominated by historical empirical error and domain discrepancy. It also serves as theoretical support to our proposed \fname, which is a generic learning framework to enhance knowledge transfer across dynamic graphs.
Moreover, we utilize a multi-resolution temporal encoding module to model domain evolution and a module to minimize domain discrepancy via dual divergence loss. In particular, the first module captures the interdependence over
time and obtains the temporal graph representation in the evolving graphs, while the second module learns invariant representations to unify the source and target domains' spatial and temporal information.
Our empirical results show that \fname\ outperforms the state-of-the-art models by up to 12.1\%, underscoring its effectiveness in knowledge transfer across dynamic graphs.
Furthermore, we extensively surveyed existing temporal graphs and constructed benchmark datasets\footnote{We publish our data and code at~\url{https://github.com/wanghh7/EvoluNet}.} for dynamic non-IID transfer learning, which have rich, dynamic properties regarding nodes, edges, node attributes, and labels. We conduct various evaluations on the constructed benchmark dataset, which demonstrate its validity and reliance.

\section{Preliminary}\label{sec:preliminary}
In this section, we introduce the background that is pertinent to our work and give the formal problem definition. Table~\ref{TB:Notations} summarizes the main notations used in this paper. We use regular letters to denote scalars (\eg, $\mu$), boldface lowercase letters to denote vectors (\eg, $\mathbf{v}$), and boldface uppercase letters to denote matrices (\eg, $\mathbf{X}$).
Next, we briefly review non-IID transfer learning on graphs and dynamic transfer learning for IID distributions.\\

\begin{table}[h]
\caption{Symbols and notations.}
\centering
\scalebox{0.95}{
\begin{tabular}{|l|l|}
\hline Symbol&Description\\
\hline
\hline 
$\mathcal{G}_{src}^{(i)}$, $\mathcal{G}_{tgt}^{(i)}$&input source and target graphs at timestamp $i$.\\
$\mathcal{V}_{src}^{(i)}$, $\mathcal{V}_{tgt}^{(i)}$&the set of nodes in $\mathcal{G}_{src}^{(i)}$ and $\mathcal{G}_{tgt}^{(i)}$.\\
$\mathcal{E}_{src}^{(i)}$, $\mathcal{E}_{tgt}^{(i)}$&the set of edges in $\mathcal{G}_{src}^{(i)}$ and $\mathcal{G}_{tgt}^{(i)}$.\\
$\mathbf{X}_{src}^{(i)}$, $\mathbf{X}_{tgt}^{(i)}$&the node feature matrices of $\mathcal{G}_{src}^{(i)}$, $\mathcal{G}_{tgt}^{(i)}$.\\
$\mathcal{Y}_{src}^{(i)}$, $\tilde{\mathcal{Y}}_{tgt}^{(i)}$&the set of labels in $\mathcal{G}_{src}^{(i)}$ and $\mathcal{G}_{tgt}^{(i)}$.\\
$N_{src}^{(i)}$, $N_{tgt}^{(i)}$&the size of sample graph $\mathcal{G}_{src}^{(i)}$, $\mathcal{G}_{tgt}^{(i)}$.\\
$d_{src}$, $d_{tgt}$&feature dimensions of $\mathbf{X}_{src}^{(i)}$, $\mathbf{X}_{tgt}^{(i)}$, $\forall i$.\\
$T$&number of timestamps.\\
\hline 
$h(\cdot)$&node classifier for downstream task.\\
\hline 
$\tilde{\Re}$&Rademacher complexity.\\
$W_p$&$p$-Wasserstein distance.\\
\hline
\end{tabular}
}
\label{TB:Notations}
\end{table}

\noindent\textbf{Non-IID Transfer Learning on Graphs. }It focuses on leveraging knowledge gained from a source graph $\mathcal{G}_{src}$ to improve the performance of a target graph $\mathcal{G}_{tgt}$. Graphs are non-IID because their interconnected nodes, edges, and subgraphs exhibit inherent dependencies, and require modeling the highly complex interconnection. To applied existing theoretical guarantees of transfer learning under IID assumption to non-IID graph data, Wu \etal~\yrcite{wu2023non} propose a novel graph discrepancy $d_{\text{GSD}}(\mathcal{G}_{src}, \mathcal{G}_{tgt})$ between two graphs $\mathcal{G}_{src}$ and $\mathcal{G}_{tgt}$ as follows (informal):
\begin{equation*}
    d_{\text{GSD}}(\mathcal{G}_{src}, \mathcal{G}_{tgt})=\lim _{M \rightarrow \infty} \frac{1}{M+1} \sum_{m=0}^M d_b(\mathcal{G}_{src}^m, \mathcal{G}_{tgt}^m),
\end{equation*}
where $\mathcal{G}^m$ is the Weisfeiler-Lehman subgraph~\cite{Shervashidze11Weisfeiler} at depth $m$ for an input graph $\mathcal{G}$, $d_b(\cdot,\cdot)$ is the base domain discrepancy. We refer to Definition~\ref{def:GraphDis} in Appendix~\ref{sec:proof} for formal definition.
It reveals that given Weisfeiler-Lehman subtree, the subtree representations can be considered as IID samples, thus existing distribution discrepancy measures (\eg, Maximum Mean Discrepancy (MMD)~\cite{Gretton12kernel} and Wasserstein Distance~\cite{villani09optimal}) can be used to measure the distribution shift of source and target graphs.

\noindent\textbf{Dynamic Transfer Learning.}
Let $\{\mathcal{D}_{src}^{(i)}\}_{i=1}^T$ and $\{\mathcal{D}_{tgt}^{(i)}\}_{i=1}^T$ be the labeled dynamic source domains and unlabeled (or few labeled) dynamic target domains, where superscript $(i)$ represents the $i^\text{th}$ timestamp, and there are $T$ total timestamps. Dynamic transfer learning aims to improve the prediction performance of
$\mathcal{D}_{tgt}^{(T+1)}$ using the knowledge in historical source and target domains under the domain shift $\{\mathcal{D}_{src}^{(i)}\}_{i=1}^T \neq \{\mathcal{D}_{tgt}^{(i)}\}_{i=1}^T$. Let $\mathcal{H}$ be the hypothesis class on input feature space $\mathcal{X}$ where a hypothesis is a function $h: \mathcal{X} \rightarrow \mathcal{Y}$, and $\mathcal{Y}$ is output label space. The expected error of the hypothesis $h$ on the source domain $\mathcal{D}_{src}^{(i)}$ at timestamp $i$ is given by $\epsilon_{src}^{(i)}(h)=\mathbb{E}_{\mathbf{x} \sim \mathcal{D}_{src}^{(i)}}[\mathcal{L}(h(\mathbf{x}), y)], \forall h \in \mathcal{H}$, where $\mathcal{L}(\cdot,\cdot)$ is some loss function. Its empirical estimate is defined as $\hat{\epsilon}_{src}^{(i)}(h)=\frac{1}{N_{src}^{(i)}}\sum_{j=1}^{N_{src}^{(i)}}[\mathcal{L}(h(\mathbf{x}_j), y_j)]$, where $\mathbf{x}_j$ is the feature of $j^\text{th}$ sample in  $\mathbf{X}_{src}^{(i)}$. We use the parallel notations $\epsilon_{tgt}^{(i)}(h)$ and $\hat{\epsilon}_{tgt}^{(i)}(h)$ for the target domain. 
In~\citet{Wu22Unified}, the expected error on the newest target domain is derived as follows:
\begin{equation}
\begin{aligned}
\epsilon_{tgt}^{(T+1)}(h) & \leq \frac{1}{2 T} \sum_{i=1}^{T}\left(\hat{\epsilon}_{src}^{(i)}(h)+\hat{\epsilon}_{tgt}^{(i)}(h)\right)+\frac{T+2}{2}(\tilde{d}+\tilde{\lambda}) \\
&+\tilde{\Re}\left(\mathcal{H}_{\mathcal{L}}\right)+\frac{\rho}{T} \sqrt{\frac{\log \frac{1}{\delta}}{2 \tilde{m}}}, \\
\end{aligned}
\end{equation}
where $\tilde{d}=\rho \cdot \max \left\{\max _{1 \leq i \leq T-1} d_{\text{MMD}}\left(\mathcal{D}^{(i)}_{src}, \mathcal{D}^{(i+1)}_{src}\right), \right. \\
\left. d_{\text{MMD}}\left(\mathcal{D}^{(1)}_{src}, \mathcal{D}^{(1)}_{tgt}\right), \max _{1 \leq i \leq T} d_{\text{MMD}}\left(\mathcal{D}^{(i)}_{tgt}, \mathcal{D}^{(i+1)}_{tgt}\right)\right\}$, $d_{\text{MMD}}$ is the maximum mean discrepancy~\cite{Gretton12kernel}, $\rho$ is the Lipschitz constant, $\tilde{\lambda}=\rho \cdot \max\left\{\max _{1 \leq i \leq T-1}\lambda_{*}\left(\mathcal{D}^{(i)}_{src}, \mathcal{D}^{(i+1)}_{src}\right), \lambda_{*}\left(\mathcal{D}^{(1)}_{src}, \mathcal{D}^{(1)}_{tgt}\right), \right. \\ \left. \max _{1 \leq i \leq T} \lambda_{*}\left(\mathcal{D}^{(i)}_{tgt}, \mathcal{D}^{(i+1)}_{tgt}\right)\right\}$, $\lambda_{*}$ measures the labeling difference.
$\mathcal{H}_{\mathcal{L}}=\{(\mathbf{X}, y)\mapsto\mathcal{L}(h(\mathbf{X}), y): h\in\mathcal{H}\}$, $\tilde{\Re}\left(\mathcal{H}_{\mathcal{L}}\right)$ is a term that involves the Rademacher complexity defined on multiple domains, and $\tilde{m}=\sum_{i=1}^{T}\left(N^{(i)}_{src}+N^{(i)}_{tgt}\right)$ is the total number of training examples from historical source and target domains. However, this bound sums the errors in all timestamps without capturing domain evolution.

\begin{figure}[t]
  \centering
  \includegraphics[width=1.0\linewidth]{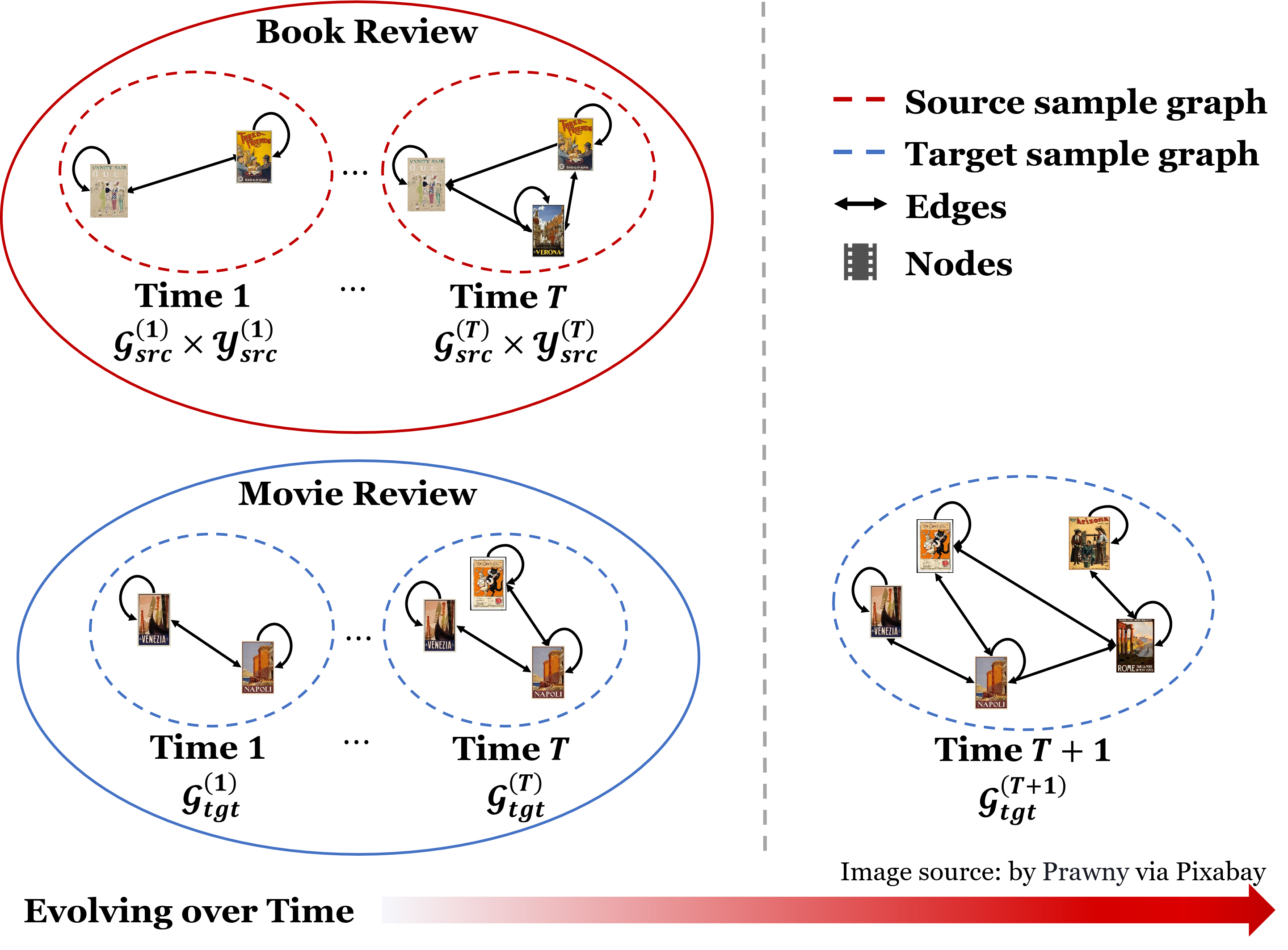}
  \caption{An illustrative example of dynamic non-IID transfer learning on book review graph and movie review graph. As an example, consider a new series launched on a movie website, where the original book of this series may have been published for decades. It is very natural to transfer knowledge from the information-rich source domain (book) to the information-scarce target domain (movie) across time in order to solve the target task (movie review prediction) at $\mathcal{G}_{tgt}^{(T+1)}$.} 
  \label{fig:example}
\end{figure}

\noindent\textbf{Problem definition.} In the setting of dynamic non-IID transfer learning on graphs,
the observed graph in source at timestamp $i$ is defined as a source sample graph $\mathcal{G}_{src}^{(i)} = (\mathcal{V}_{src}^{(i)}, \mathcal{E}_{src}^{(i)})$ (parallel definition of target sample graph $\mathcal{G}_{tgt}^{(i)}=(\mathcal{V}_{tgt}^{(i)}, \mathcal{E}_{tgt}^{(i)})$), where $\mathcal{V}_{src}^{(i)}$ and $\mathcal{V}_{tgt}^{(i)}$ represent the set of nodes, and $\mathcal{E}_{src}^{(i)}$ and $\mathcal{E}_{tgt}^{(i)}$ represent the set of edges, respectively. 
$\mathbf{X}_{src}^{(i)}$ and $\mathbf{X}_{tgt}^{(i)}$ represent the node features in $\mathcal{G}_{src}^{(i)}$ and $\mathcal{G}_{tgt}^{(i)}$,
$T$ is the total number of timestamps that can be observed in history. 
Furthermore, we define labels of source at $i^{th}$ timestamp as $\mathcal{Y}_{src}^{(i)}$ and labels of target as $\tilde{\mathcal{Y}}_{tgt}^{(i)}$, where only few target samples have labels, so $\tilde{\mathcal{ Y}}_{tgt}^{(i)}$ is a sparse vector. 
We consider transferring knowledge from a series of time-evolving source sample graphs $\{\mathcal{G}_{src}^{(i)}\}_{i=1}^T$ to a series of time-evolving target sample graphs $\{\mathcal{G}_{tgt}^{(i)}\}_{i=1}^{T+1}$. Figure~\ref{fig:example} illustrates the knowledge transfer from historical time snapshots of the book review graph and movie review graph to a more recent time snapshot of the movie review graph. Here, each node in a graph indicates an entity (user, movie, book), and the co-reviewer determines the edge between two nodes. Nodes, edges, and their attributes are evolving over time. The node label is the popularity of the movie (book) at that time and is also changing.

Given the notations above, we formally define the problem as follows.
\begin{problem}\label{prob}
  \textbf{Dynamic Non-IID Transfer Learning on Graphs}\\
  \textbf{Given:} (i) a set of source sample graphs $\{\mathcal{G}_{src}^{(i)} = (\mathcal{V}_{src}^{(i)}, \mathcal{E}_{src}^{(i)})\}_{i=1}^T$ with rich label information $\{\mathcal{Y}_{src}^{(i)}\}_{i=1}^T$, and (ii) a set of target sample graphs $\{\mathcal{G}_{tgt}^{(i)} = (\mathcal{V}_{tgt}^{(i)}, \mathcal{E}_{tgt}^{(i)})\}_{i=1}^{T+1}$ with few label information $\{\tilde{\mathcal{Y}}_{tgt}^{(i)}\}_{i=1}^{T+1}$.\\
  \textbf{Find:} Accurate predictions $\hat{\mathcal{Y}}_{tgt}^{(T+1)}$ of unlabeled examples in the target sample graph $\mathcal{G}_{tgt}^{(T+1)}$.
\end{problem}

\section{Model}\label{sec:model}
In this section, we introduce our proposed framework \fname\ for dynamic non-IID transfer learning on graphs. 
The key idea lies in regularizing the underlying evolving domain discrepancy, which mainly stems from the distribution shift due to domain evolution and the inherent domain discrepancy between the source and target domains.
In particular, we start with deriving a novel generalization bound of Problem~\ref{prob}, which is composed of historical empirical errors on the source and target domains, domain discrepancies across time on source and target, and Rademacher complexity of the hypothesis class. 
Inspired by the theoretical results, we then develop the overall learning paradigm of \fname\ and discuss the details of how to model domain evolution and how to unify dynamic graph distribution. 
Finally, we present an optimization algorithm with pseudo-code for \fname\ in Algorithm~\ref{Alg} in Appendix~\ref{sec:pseudo}.

\subsection{Theoretical Analysis}\label{theory}
Here, we propose the very first generalization guarantee under the setting of dynamic non-IID transfer learning on graphs. The existing literature~\cite{Wu22Unified} leads to a  loosely bound in special cases when the historical empirical error at a specific timestamp is extremely large. This is because Wu and He's work simply accumulates the empirical errors across time, which results in their generalization error bound being sensitive to extreme cases.

To derive a better error bound, we propose to improve our bound mainly from the following three aspects: 
(1) We propose to replace $\sum_{i=1}^{T}\left(\hat{\epsilon}_{src}^{(i)}(h)+\hat{\epsilon}_{tgt}^{(i)}(h)\right)$ with the minimum value of historical empirical errors on source and target. 
The conventional measurement is notably susceptible to outliers over time. This sensitivity becomes particularly evident when a machine learning model encounters failures at specific timestamps, leading to exceptionally large empirical errors in both the source and target domains. In contrast, using the minimum value demonstrates inherent resilience to such extreme cases and prevents the error bound from being impacted by some extreme cases.
(2) We develop a novel dynamic Wasserstein distance to replace maximum mean discrepancy $\tilde{d}$ for better measuring the evolving domain discrepancy. (3) To accurately characterize the graph distribution shift, we propose to construct the Weisfeiler-Lehman subgraphs at each timestamp and then compute the dynamic graph discrepancy upon them. 

We first introduce the definition of dynamic $p$-Wasserstein distance on graphs, which measures the graph discrepancy across tasks and across time stamps.
\begin{restatable}[Dynamic $p$-Wasserstein Distance on Graphs]{definition}{wassersteinG}
Consider two dynamic graphs $\{\mathcal{G}_{src}^{(i)}\}_{i=1}^T$ and $\{\mathcal{G}_{tgt}^{(i)}\}_{i=1}^{T+1}$. For any $p\geq 1$, the dynamic $p$-Wasserstein distance is defined as:
\begin{equation*}
\begin{aligned}
    \tilde{W}_{p}=&\rho\sqrt{R^2+1}\max\left(\max_{1\leq i\leq T-1}d_{\text{GSD}}(\mathcal{G}_{src}^{(i)},\mathcal{G}_{src}^{(i+1)}), \right. \\
    & \left. d_{\text{GSD}}(\mathcal{G}_{src}^{(1)}, \mathcal{G}_{tgt}^{(1)}),
    \max_{1\leq i\leq T}d_{\text{GSD}}(\mathcal{G}_{tgt}^{(i)}, \mathcal{G}_{tgt}^{(i+1)})\right),
\end{aligned}
\end{equation*}
where $R$ and $\rho$ are the Lipschitz constants, $d_{\text{GSD}}$ denotes the graph discrepancy based on $p$-Wasserstein distance $W_p$~\cite{wu2023non}.
\end{restatable}

Based on Definition 1, we can apply Lemma~\ref{lemma:shift domains} (Error Difference over Shifted Domains) to bound the error difference on two arbitrary domains as follows.

\begin{restatable}[Error Difference over Shifted Domains~\cite{Wang22Understanding}]{lemma}{shiftDomains}
\label{lemma:shift domains}
For arbitrary classifier $h$ and loss function $\mathcal{L}$ satisfying Assumption~\ref{asmp:R-Lip} and~\ref{asmp:rho-Lip}, the expected error of $h$ on two arbitrary domain $\mathcal{D}_\mu$ and $\mathcal{D}_\nu$ satisfies
\begin{equation*}
    \left|\epsilon_{\mu}(h)-\epsilon_{\nu}(h)\right| \leq \rho \sqrt{R^{2}+1} W_{p}(\mathcal{D}_\mu, \mathcal{D}_\nu),
\end{equation*}
where $W_{p}$ is the $p$-Wasserstein distance metric and $p \geq 1$.
\end{restatable}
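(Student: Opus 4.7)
The plan is to reduce the claim to a standard Lipschitz-coupling argument for the Wasserstein distance. Both $\epsilon_\mu(h)$ and $\epsilon_\nu(h)$ are expectations of the same composite function $f(x, y) := \mathcal{L}(h(x), y)$ taken under different measures, so controlling their difference amounts to bounding $|\mathbb{E}_{\mathcal{D}_\mu}[f] - \mathbb{E}_{\mathcal{D}_\nu}[f]|$ by a multiple of $W_p(\mathcal{D}_\mu, \mathcal{D}_\nu)$. The only nontrivial ingredient is to show that $f$ is jointly Lipschitz in $(x, y)$ with constant exactly $\rho\sqrt{R^2+1}$.

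To establish the joint Lipschitz bound, I would invoke Assumption~\ref{asmp:R-Lip} ($h$ is $R$-Lipschitz) together with Assumption~\ref{asmp:rho-Lip} ($\mathcal{L}$ is $\rho$-Lipschitz in each of its two arguments). Inserting and subtracting $\mathcal{L}(h(x_2), y_1)$ and applying the two coordinatewise Lipschitz properties separately gives
\begin{align*}
|f(x_1, y_1) - f(x_2, y_2)|
&\leq \rho \|h(x_1) - h(x_2)\| + \rho |y_1 - y_2|\\
&\leq \rho R \|x_1 - x_2\| + \rho |y_1 - y_2|\\
&\leq \rho\sqrt{R^2+1}\,\sqrt{\|x_1 - x_2\|^2 + |y_1 - y_2|^2},
\end{align*}
where the last line is Cauchy-Schwarz applied to the vectors $(R, 1)$ and $(\|x_1 - x_2\|, |y_1 - y_2|)$. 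Consequently $f$ is $\rho\sqrt{R^2+1}$-Lipschitz with respect to the Euclidean product metric on $\mathcal{X} \times \mathcal{Y}$.

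I would then finish by applying the Kantorovich-Rubinstein duality: for every $1$-Lipschitz $g$, $|\mathbb{E}_\mu[g] - \mathbb{E}_\nu[g]| \leq W_1(\mathcal{D}_\mu, \mathcal{D}_\nu)$. Rescaling by the Lipschitz constant of $f$ yields $|\epsilon_\mu(h) - \epsilon_\nu(h)| \leq \rho\sqrt{R^2+1}\,W_1(\mathcal{D}_\mu, \mathcal{D}_\nu)$. Since the $p$-Wasserstein distances are non-decreasing in $p$ (apply Jensen's inequality to the coupling definition with the convex map $t \mapsto t^p$), we have $W_1 \leq W_p$ for every $p \geq 1$, which gives the claimed bound.

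The main obstacle is not computational but a question of convention: the Cauchy-Schwarz step produces the constant $\sqrt{R^2+1}$ only under the Euclidean product metric on $\mathcal{X} \times \mathcal{Y}$, so the plan requires checking that $W_p$ in the lemma statement is defined with that same ground metric. If the paper adopts a different (e.g.\ weighted or $\ell_q$) ground metric, the constant must be adjusted by the corresponding norm-equivalence factor, but the structure of the argument, and in particular the reliance on Kantorovich-Rubinstein and the joint Lipschitz composition, remains unchanged.
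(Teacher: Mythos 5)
Your argument is correct and is essentially the canonical proof of this bound: the paper itself does not prove Lemma~\ref{lemma:shift domains} but imports it from \citet{Wang22Understanding}, and your reconstruction (triangle inequality through $\mathcal{L}(h(x_2),y_1)$, the two Lipschitz assumptions, Cauchy--Schwarz on $(R,1)$ to get $\rho\sqrt{R^2+1}$, Kantorovich--Rubinstein duality for $W_1$, and monotonicity $W_1\leq W_p$) matches that source's derivation. Your closing caveat is also well placed, since Definition~\ref{def:Wasserstein} in the paper leaves the ground metric $d$ unspecified and the constant $\sqrt{R^2+1}$ is tied to the Euclidean product metric on $\mathcal{X}\times\mathcal{Y}$.
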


Intuitively, Lemma~\ref{lemma:shift domains} yields that the expected error on the target domain at the $N+1$ timestamp $\epsilon_{tgt}^{(T+1)}$ is upper bounded with an expected error on an arbitrary domain and the maximum of measures of domain discrepancy. 
Based on Lemma~\ref{lemma:shift domains}, we can further generalize the difference between the expected error and the empirical error to the arbitrary domains via Lemma~\ref{lemma:stability} (Algorithm Stability) as follows. 
\begin{restatable}[Algorithm Stability, from Lemma A.1 in Kumar \etal~\yrcite{Kumar20Understanding}]{lemma}{stability}
\label{lemma:stability}
With the assumptions~\ref{asmp:R-Lip},~\ref{asmp:rho-Lip},~\ref{asmp:comp}, consider empirical and expected errors on arbitrary domain with $n$ samples, $\forall~\delta \in(0,1)$, the following holds with probability at least $1-\delta$ for some constant $B>0$,
\begin{equation*}
    \left|\hat{\epsilon}(h)-\epsilon(h)\right| \leq \mathcal{O}\left(\frac{\rho B+\sqrt{\log \frac{1}{\delta}}}{\sqrt{n}}\right).
\end{equation*}
\end{restatable}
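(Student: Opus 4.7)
The plan is to prove this via a standard uniform convergence argument built on Rademacher complexity, McDiarmid's bounded-differences inequality, and Talagrand's contraction lemma. The assumed $R$-Lipschitzness of the hypothesis $h$ (Assumption~\ref{asmp:R-Lip}), $\rho$-Lipschitzness of the loss $\mathcal{L}$ (Assumption~\ref{asmp:rho-Lip}), and the capacity assumption bounding the complexity of $\mathcal{H}$ by some constant $B$ (Assumption~\ref{asmp:comp}) are precisely the ingredients these classical tools require.

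First, I would form the composed loss class $\mathcal{H}_{\mathcal{L}} = \{(\mathbf{x}, y) \mapsto \mathcal{L}(h(\mathbf{x}), y) : h \in \mathcal{H}\}$ and verify via the two Lipschitz assumptions that each function in $\mathcal{H}_{\mathcal{L}}$ is uniformly bounded on the input domain. This boundedness is what permits McDiarmid's inequality to be applied to $\sup_{h \in \mathcal{H}} |\hat{\epsilon}(h) - \epsilon(h)|$ with changing a single sample shifting the supremum by only $\mathcal{O}(1/n)$, yielding a concentration term of order $\sqrt{\log(1/\delta)/n}$ around its mean. A standard symmetrization step then controls that mean by twice the Rademacher complexity $\tilde{\Re}(\mathcal{H}_{\mathcal{L}})$ of the loss class on an $n$-sample draw.

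Next, I would invoke Talagrand's contraction principle to strip off the $\rho$-Lipschitz loss, giving $\tilde{\Re}(\mathcal{H}_{\mathcal{L}}) \leq \rho \cdot \tilde{\Re}(\mathcal{H})$. The complexity assumption is then used to bound $\tilde{\Re}(\mathcal{H}) \leq B/\sqrt{n}$, where $B$ encapsulates the relevant capacity parameter (e.g., an RKHS norm ball, a covering-number envelope, or an entropy-integral quantity). Combining the concentration term with the contracted Rademacher term, and absorbing constants, produces the stated inequality of order $(\rho B + \sqrt{\log(1/\delta)})/\sqrt{n}$.

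The main obstacle lies entirely in the last step, namely translating the abstract complexity assumption into the $B/\sqrt{n}$ Rademacher bound. If $B$ is merely a covering-number or fat-shattering parameter rather than a direct norm bound, one needs a chaining argument (Dudley's entropy integral) to recover the $1/\sqrt{n}$ rate, and care must be taken to ensure the resulting constants and logarithmic factors absorb cleanly into the $\mathcal{O}(\cdot)$ notation without polluting the $\sqrt{\log(1/\delta)}$ term. The remaining ingredients (symmetrization, McDiarmid, contraction) are fully mechanical once uniform boundedness of the loss is secured from the two Lipschitz assumptions.
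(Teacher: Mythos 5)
Your proposal is correct and takes essentially the same route as the paper, which does not prove this lemma itself but defers it to Lemma A.1 of \citet{Kumar20Understanding} and Proposition 1 of \citet{Wang22Understanding}, where the argument is precisely your chain of symmetrization, McDiarmid's inequality, and Talagrand's contraction to peel off the $\rho$-Lipschitz loss. The final step you flag as the main obstacle is in fact immediate here, since Assumption~\ref{asmp:comp} directly posits $\tilde{\Re}(\mathcal{H})\leq B/\sqrt{n}$, so no chaining or covering-number translation is needed.
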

With Lemma~\ref{lemma:stability}, we are able to bound $\epsilon_{tgt}^{(T+1)}$ with minimal empirical errors on the source and target and the maximum domain discrepancy. That being said, the error of the latest target domain $\epsilon_{tgt}^{(T+1)}$ can be bounded. Finally, we can derive our generalization bound for dynamic non-IID transfer learning on graphs, as stated by the following Theorem~\ref{THM:errorBound}.

\begin{restatable}{theorem}{errorBound}
\label{THM:errorBound}
Assume classifier $h\in\mathcal{H}$ is $R$-Lipschitz and loss function $\mathcal{L}(\cdot, \cdot)$ is $\rho$-Lipschitz, where $R$ and $\rho$ are the Lipschitz constants. For any $\delta>0$, with probability at least $1-\delta$, the error $\epsilon_{tgt}^{(T+1)}$ is bounded by:
\begin{equation}
\begin{aligned}
    \epsilon_{tgt}^{(T+1)}(h) &\leq \frac{1}{2}\min_{1\leq i\leq T}\left(\hat{\epsilon}_{src}^{(i)}(h)+\hat{\epsilon}_{tgt}^{(i)}(h)\right)+\frac{3T}{2}\tilde{W_p}\\
    &+\tilde{\Re}\left(\mathcal{H}_{\mathcal{L}}\right)+\mathcal{O}\left(\frac{\rho B}{\sqrt{\tilde{n}}}+\sqrt{\frac{\log\frac{1}{\delta}}{\tilde{n}}}\right)
\end{aligned}
\end{equation}
where $\tilde{W}_{p}$ is dynamic Wasserstein distance on graphs, $p\geq 1$, 
$\mathcal{H}_{\mathcal{L}}=\{(\mathbf{X}, y)\mapsto\mathcal{L}(h(\mathbf{X}), y): h\in\mathcal{H}\}$, 
$\tilde{\Re}(\mathcal{H}_{\mathcal{L}}) = \frac{1}{2T}\sum_{i=1}^{T}\left(\tilde{\Re}_{\mathcal{D}_{src}^{(i)}}(\mathcal{H}_{\mathcal{L}}) + \tilde{\Re}_{\mathcal{D}_{tgt}^{(i)}}(\mathcal{H}_{\mathcal{L}})\right)$, $\tilde{\Re}$ is Rademacher complexity, $B>0$ is a constant, and $\tilde{n}=\min_{1\leq i\leq T}\left(N_{src}^{(i)}, N_{tgt}^{(i)}\right)$ is the minimal number of training examples in source and target domains.
\end{restatable}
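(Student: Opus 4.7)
The plan is to combine Lemma~\ref{lemma:shift domains} (chain-shift inequality via $p$-Wasserstein distance) with Lemma~\ref{lemma:stability} (pointwise stability) and a uniform-convergence argument that surfaces the Rademacher complexity. First, I would fix an index $i\in\{1,\dots,T\}$ and chain Lemma~\ref{lemma:shift domains} along two separate paths in the dynamic graph sequence. Along the pure target chain $\mathcal{G}_{tgt}^{(T+1)}\to\mathcal{G}_{tgt}^{(T)}\to\dots\to\mathcal{G}_{tgt}^{(i)}$, telescoping Lemma~\ref{lemma:shift domains} yields $\epsilon_{tgt}^{(T+1)}(h)\leq\epsilon_{tgt}^{(i)}(h)+(T+1-i)\,\tilde{W}_p$, because every adjacent pair is uniformly dominated by the maximum in the definition of $\tilde{W}_p$. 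Along the mixed chain $\mathcal{G}_{tgt}^{(T+1)}\to\dots\to\mathcal{G}_{tgt}^{(1)}\to\mathcal{G}_{src}^{(1)}\to\dots\to\mathcal{G}_{src}^{(i)}$, the same telescoping gives $\epsilon_{tgt}^{(T+1)}(h)\leq\epsilon_{src}^{(i)}(h)+(T+i)\,\tilde{W}_p$, where the extra cross-domain hop at timestamp $1$ is exactly the $d_{\text{GSD}}(\mathcal{G}_{src}^{(1)},\mathcal{G}_{tgt}^{(1)})$ term embedded in $\tilde{W}_p$.

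Second, I would average the two chain inequalities to obtain, for each $i$,
$$\epsilon_{tgt}^{(T+1)}(h)\leq\tfrac{1}{2}\bigl(\epsilon_{src}^{(i)}(h)+\epsilon_{tgt}^{(i)}(h)\bigr)+\tfrac{1}{2}\bigl[(T+i)+(T+1-i)\bigr]\tilde{W}_p,$$
and relax $(T+i)+(T+1-i)=2T+1\leq 3T$ (valid whenever $T\geq 1$), which produces the $\tfrac{3T}{2}\tilde{W}_p$ coefficient that appears in the theorem. This symmetrization of the two chains is what makes it possible, later, to take the minimum over $i$ on the \emph{sum} $\hat{\epsilon}_{src}^{(i)}+\hat{\epsilon}_{tgt}^{(i)}$ rather than on either term alone.

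Third, I would convert expected errors to empirical ones. A standard Rademacher symmetrization applied in the spirit of Lemma~\ref{lemma:stability} gives, for each single domain, the uniform bound $\epsilon(h)\leq\hat{\epsilon}(h)+2\tilde{\Re}_{\mathcal{D}}(\mathcal{H}_{\mathcal{L}})+\mathcal{O}\!\left(\rho B/\sqrt{n}+\sqrt{\log(1/\delta)/n}\right)$ over all $h\in\mathcal{H}$. A union bound across the $2T$ historical source and target domains (rescaling $\delta\mapsto\delta/2T$, whose log cost is absorbed into the $\mathcal{O}$) makes the inequality hold simultaneously at every timestamp, with the per-domain denominators $\sqrt{N_{src}^{(i)}}$ and $\sqrt{N_{tgt}^{(i)}}$ controlled uniformly by $\sqrt{\tilde{n}}$ where $\tilde{n}=\min_{i}\bigl(N_{src}^{(i)},N_{tgt}^{(i)}\bigr)$. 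Averaging these per-domain Rademacher penalties matches exactly the definition $\tilde{\Re}(\mathcal{H}_{\mathcal{L}})=\tfrac{1}{2T}\sum_{i=1}^T\bigl(\tilde{\Re}_{\mathcal{D}_{src}^{(i)}}(\mathcal{H}_{\mathcal{L}})+\tilde{\Re}_{\mathcal{D}_{tgt}^{(i)}}(\mathcal{H}_{\mathcal{L}})\bigr)$ stated in the theorem.

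Finally, substituting this empirical bound back into the averaged chain inequality and taking $\min_{1\leq i\leq T}$ on both sides delivers the claimed estimate for $\epsilon_{tgt}^{(T+1)}(h)$. The main obstacle, I expect, is not any single step but the bookkeeping that keeps the Rademacher and stability terms correctly aligned with the minimization: the $\min_i$ must be applied \emph{after} the uniform conversion to empirical errors, so the Rademacher penalty cannot be shrunk by the same $\min$ and must instead appear as the average over all $T$ timestamps. The union bound and a careful accounting of the two telescoped chains (rather than the chain length being linear in $T$ alone) are the technical lynchpins; the relaxation $2T+1\leq 3T$ that produces the paper's coefficient is elementary once the two chains are set up correctly.
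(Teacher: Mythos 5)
Your proposal is correct in substance but reaches the bound by a genuinely different route than the paper. The paper never telescopes to a fixed timestamp directly: it defines the functional $g(\mathcal{B})=\sup_{h}\epsilon_{tgt}^{(T+1)}(h)-\frac{1}{2T}\sum_{i=1}^{T}(\hat{\epsilon}_{src}^{(i)}(h)+\hat{\epsilon}_{tgt}^{(i)}(h))$, applies McDiarmid's inequality (Lemma~\ref{lemma:McDiarmid}) to concentrate it around its expectation, bounds $\mathbb{E}[g(\mathcal{B})]$ by $\frac{T+2}{2}\tilde{W}_p+\tilde{\Re}(\mathcal{H}_{\mathcal{L}})$ via the same telescoping of Lemma~\ref{lemma:shift domains} you use, and only afterwards converts the \emph{average} of empirical errors into the \emph{minimum} by chaining Lemmas~\ref{lemma:shift domains} and~\ref{lemma:stability} once more, paying an extra $(T-1)\tilde{W}_p$; the coefficient $\frac{3T}{2}=\frac{T+2}{2}+(T-1)$ is born from that detour. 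Your two-chain symmetrization gets the minimum directly at cost $\frac{2T+1}{2}\tilde{W}_p$, which is in fact slightly tighter before you relax to $\frac{3T}{2}$, and it cleanly explains why the min acts on the sum $\hat{\epsilon}_{src}^{(i)}+\hat{\epsilon}_{tgt}^{(i)}$. What the paper's route buys is a single concentration event with no union bound over timestamps (hence no $\log T$) and, as a byproduct, exactly the averaged Rademacher term $\frac{1}{2T}\sum_{i}(\tilde{\Re}_{\mathcal{D}_{src}^{(i)}}+\tilde{\Re}_{\mathcal{D}_{tgt}^{(i)}})$ appearing in the statement. Your route instead leaves you with a per-timestamp penalty $\tilde{\Re}_{\mathcal{D}_{src}^{(i^*)}}+\tilde{\Re}_{\mathcal{D}_{tgt}^{(i^*)}}$ at the data-dependent minimizer $i^*$ plus a $\sqrt{\log(2T)/\tilde{n}}$ union-bound cost; neither matches the stated terms literally, and you should say explicitly that both are absorbed into $\mathcal{O}(\rho B/\sqrt{\tilde{n}}+\sqrt{\log(1/\delta)/\tilde{n}})$ using Assumption~\ref{asmp:comp} together with Lipschitz contraction of $\mathcal{L}$ (the $\log T$ absorption is the one genuinely loose point, since it is not dominated by a constant). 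With those two absorptions made explicit, your argument does establish the theorem as stated, and your closing remark that the minimization must be taken only after the uniform conversion to empirical errors is exactly the right caution.
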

\begin{proof}
The detailed proof is provided in Appendix~\ref{sec:proof}.
\end{proof}
The theorem shows that the error on the latest target domain $\epsilon_{tgt}^{(T+1)}$ is bounded in terms of (1) the minimum value of empirical errors in the historical source and target domains; (2) the maximum of domain discrepancies across time and domain; (3) the average Rademacher complexity of hypothesis class over all domains.

\textbf{Remarks: }Compared to the existing theoretical results on dynamic transfer learning~\cite{Wu22Unified}, we obtain a significantly improved bound in the following aspects. 
\begin{desclist}[topsep=-2mm, itemsep=-1pt]
\item Instead of simply averaging the errors over time as~\cite{Wu22Unified}, we propose to use the minimum of empirical errors over time to imply domain evolution and avoid extreme errors, and we have
\begin{equation*}
\begin{aligned}
    &\min_{1\leq i\leq T}\left(\hat{\epsilon}_{src}^{(i)}(h)+\hat{\epsilon}_{tgt}^{(i)}(h)\right) \\
    \leq &\frac{1}{T}\sum_{1\leq i\leq T}\left(\hat{\epsilon}_{src}^{(i)}(h)+\hat{\epsilon}_{tgt}^{(i)}(h)\right).
\end{aligned}
\end{equation*}
Correspondingly, rather than using an accumulative method to consider all timestamps, \fname\ uses multi-resolution temporal encoding and attention mechanisms to consider domain evolution uniformly.
\item Instead of separately measuring the difference of features and the difference of labels based on MMD, we propose a dynamic Wasserstein distance on graphs to model the evolving graph discrepancy. Correspondingly, \fname\ leverages dual-divergence unification to implicitly reduce this distance~\cite{Ganin16Domain,ganin2015unsupervised}.
\end{desclist}
In general, this generalization bound guarantees the transferability from evolving source domains to evolving target domains and motivates us to propose a framework for dynamic non-IID transfer learning on graphs by empirically minimizing generalization bounds with domain evolution and domain discrepancy.

\subsection{\fname\ Framework}\label{sec:framework}
Without loss of generality, a typical dynamic transfer learning paradigm can be formulated as follows.
\begin{equation}\label{Eq:existing-dynamic-transfer}
    \min_{\theta}\mathcal{L}(\theta) = \sum_{i=1}^{T} \left(\hat{\epsilon}_{src}^{(i)}(\theta) + d(\mathcal{G}^{(i)}_{src}, \mathcal{G}^{(i)}_{tgt}, \theta)\right)
\end{equation}
However, Eq.~\ref{Eq:existing-dynamic-transfer} may not well capture evolving domain discrepancy in practice due to the following two reasons. 
First, Eq.~\ref{Eq:existing-dynamic-transfer} simply sums up the empirical errors over time, which ignores the evolution process of dynamic graphs, \ie, the changes in the future snapshot $\mathcal{G}_{src}^{(t+1)}$ are often highly dependent on the structure of the current snapshot $\mathcal{G}_{src}^{(t)}$~\cite{Seyed20Representation}.
Second, accumulating the domain discrepancy over all timestamps might lose track of the fine-grained information on how domain discrepancies evolve, \eg, the domain discrepancy $d(\mathcal{G}^{(T)}_{src}, \mathcal{G}^{(T)}_{tgt}, \theta)$ in the last timestamp could play a key role in the success of the downstream task in the timestamp $T+1$. 

As shown in Theorem 1, the generalization performance is dominated by two factors: the domain evolution across time and the domain discrepancy on source and target.
Inspired by this, we propose \fname, which consists of two major modules: M1. Modeling Domain Evolution via Multi-Resolution Temporal Encoding and M2. Domain-Invariant Learning via Dual-Divergence Unification. 
In particular, M1 introduces a multi-resolution temporal encoding for dynamic graphs, which encodes temporal information into the representation with continuous values and captures domain evolution by attention; M2 further unifies disparate spatial and temporal information of source and target into the domain-invariant hidden spaces. 
In addition, both M1 and M2 are absolutely necessary to overcome the main obstacles in dynamic non-IID transfer learning on graphs. M1 ensures accurate modeling domain evolution and characterizes historical temporal information for future downstream task-related representation learning, while M2 ensures extraction of domain-invariant spatial and temporal information that could be transferred to benefit the target domain. Our ablation study (Table~\ref{tab:ablation}) firmly attests both M1 and M2 are essential in a successful dynamic graph transfer. The overview of \fname\ is presented in Figure~\ref{fig:framework}.
Next, we dive into the technical details of M1 and M2. \\

\begin{figure}[t]
  \centering
  \includegraphics[width=1.0\linewidth]{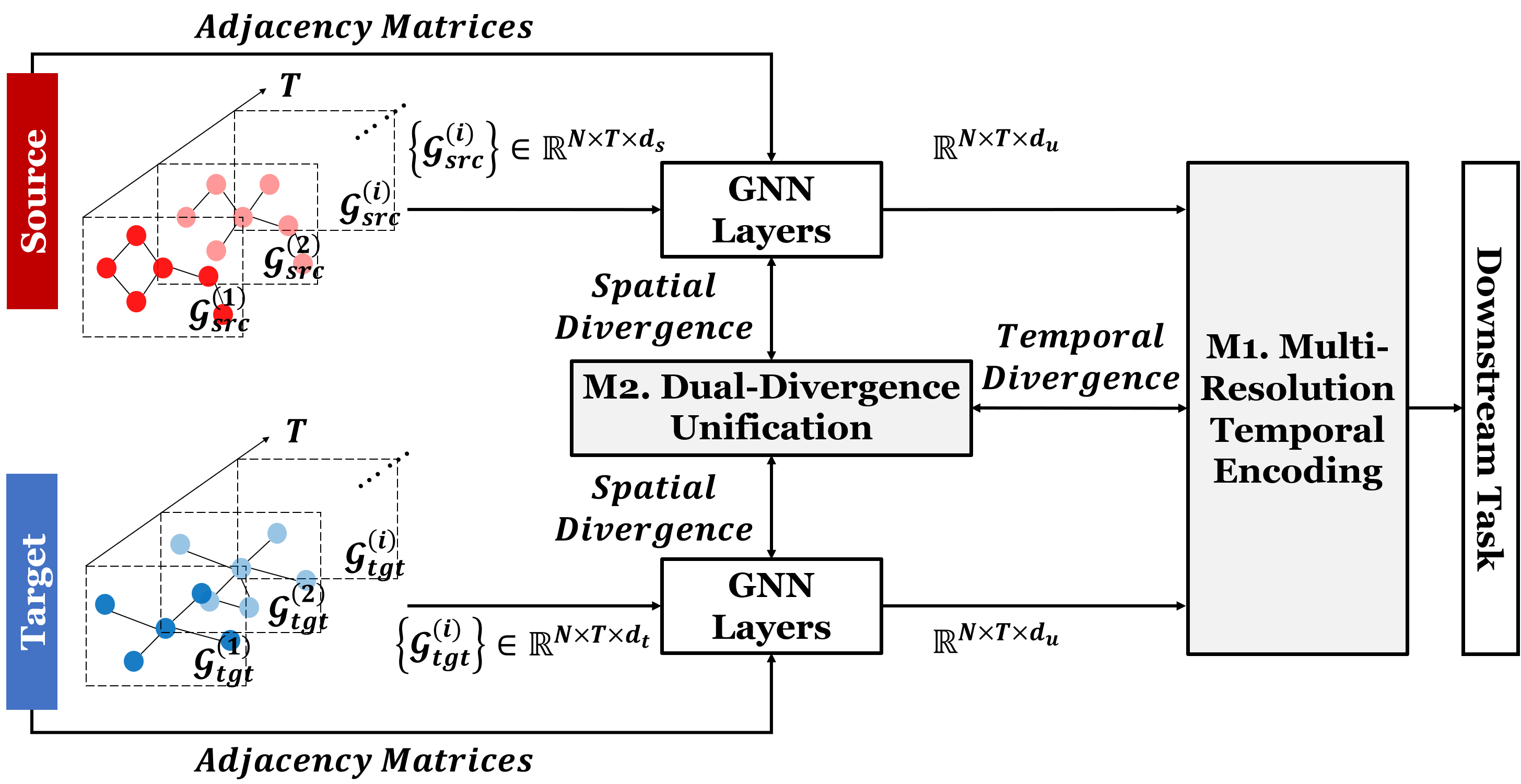}
  \caption{The proposed \fname\ framework.}
  \label{fig:framework}
\end{figure}

\noindent\textbf{M1. Modeling Domain Evolution via  Multi-Resolution Temporal Encoding.} 
Different from~\citet{Wu22Unified} that accumulates dynamic domain discrepancy over all timestamps, our method introduces a novel perspective by specifically focusing on the dynamic domain discrepancy within selected time windows. We treat each domain's time window as an integrated entity, utilizing Transformers~\cite{Vaswani17Attention} for its significant achievements in performance and computational efficiency across a variety of sequential data tasks, particularly in natural language processing. 
However, the positional encoding used in traditional Transformer models primarily serves to distinguish the sequential order of inputs rather than actual continuous time values.   
This limitation becomes particularly evident when dealing with temporal graphs that are observed at multiple timestamps, that is, the timestamps are multi-resolution and the time gap between inputs may differ.

\begin{figure}[t]
\centering
\scalebox{1.0}{
\includegraphics[width=1.0\linewidth]{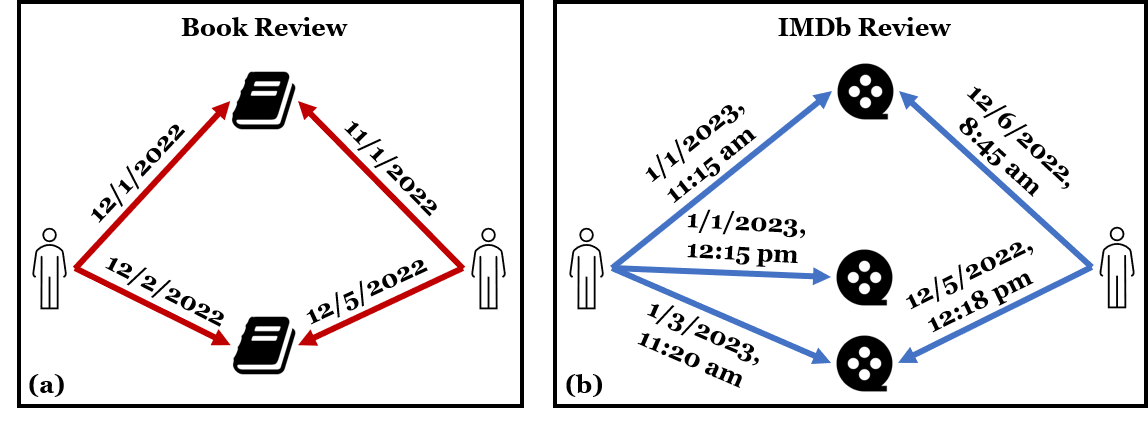}}
\caption{An illustrative example of why multi-resolution temporal encoding is important.}
\label{fig:encoding}
\end{figure}

One key challenge in temporal modeling for dynamic graphs lies in the fact that each snapshot graph is tagged with a timestamp that is both continuous and often irregular (shown in Figure~\ref{fig:encoding}). Such timestamps defy simple arithmetic operations, complicating the modeling process. To address this issue, we introduce an innovative approach of multi-resolution temporal encoding, which serves as a replacement for conventional positional encoding. This method enables our framework, \fname, to adeptly encode temporal information across multiple resolutions into a learnable representation as follows:
\begin{equation}
    \texttt{ENC} = \sum_{i=1}^T\texttt{POSITION}(\texttt{CONTEXT}(\mathcal{G}^{(i)}_{src}, \mathcal{G}^{(i)}_{tgt}))
\end{equation}
where $\texttt{CONTEXT}$ is the graph context extraction function~\cite{starnini2012random} that extracts temporal random walks from the input graphs, the $\texttt{POSITION}$ is the positional encoding function~\cite{dai2019transformer} that considers node as a token, continuous-valued timestamp as a position to capture the multi-resolution temporal information.

Next, we introduce the cross-domain self-attention layer to obtain important temporal graph representation for domain evolution. Notably, through previous operations in this framework, node embeddings of source and target sample graphs are converted into the same dimension $d_u$. Thus, a parameter-shared attention layer can be used for source and target domains to learn domain-invariant temporal node embeddings and also improve the model scalability because of its parallelism.
Specifically, for each node, we group its temporal embeddings across all the timestamps and pack them into a matrix where the order is consistent with the corresponding timestamps. 
This temporal-related matrix is passed to the self-attention layer, and the output indicates the relevance and importance of different timestamps for capturing domain evolution knowledge in terms of a specific node. 
Our cross-domain self-attention layer has advantages in two aspects:
(i) By deploying the attention layer on the source domain (target domain), we effectively capture the temporal dynamics of each domain. This allows us to model $d_{\text{GSD}}(\mathcal{G}_{src}^{(i)},\mathcal{G}_{src}^{(i+1)})$ ($d_{\text{GSD}}(\mathcal{G}_{tgt}^{(i)},\mathcal{G}_{tgt}^{(i+1)})$) in error bound.
(ii) By sharing the attention parameters of the source and target domains, we are able to capture $d_{\text{GSD}}(\mathcal{G}_{src}^{(1)}, \mathcal{G}_{tgt}^{(1)})$ in the error bounds.

\noindent\textbf{M2. Domain-Invariant Learning via Dual-Divergence Unification.}
To address the aforementioned dynamic domain divergence, we aim to learn invariant representations across evolving graphs.
Nonetheless, the process of transferring knowledge from graph-formatted data introduces inherent spatial and temporal divergences, necessitating the adoption of a dual-divergence unification approach to learning domain-invariant representations across both spatial and temporal dimensions.
In response, we present a dual-divergence unification module shown in Figure 3. 
In our implementation, we first standardize the feature dimension sizes from $d_{src}, d_{tgt}$ to a unified dimension $d_{u}$ using multi-layer perceptrons (MLPs). This standardization allows for the sharing of GNN parameters between source and target sample graphs, enhancing the learning of spatial information. We unify the MLP and the GNN into one unit named GNN Layers, and then one Gradient Reversal Layer (GRL, \citep{Ganin16Domain}) is utilized on this unit to obtain the spatial invariant representation across domains. In parallel, temporal invariance is secured by employing the GRL after the assimilation of domain evolution insights and the derivation of temporal graph representations through multi-resolution temporal encoding by module 1 (M1). The loss function $\mathcal{L}_{GRL}$ of M2 can be expressed as follows:
\begin{equation*}
\begin{aligned}
\mathcal{L}_{\text{GRL}}&=\texttt{UNIF}_{\textit{spatial}}+\texttt{UNIF}_{\textit{temporal}}\\
&=\sum_{i=1}^{T}\text{GRL}\left(\text{GNN}(\mathcal{G}_{src}^{(i)}), \text{GNN}(\mathcal{G}_{tgt}^{(i)})\right)\\
&+\sum_{i=1}^{T}\text{GRL}\left(\text{M1}(\mathcal{G}_{src}^{(i)}), \text{M1}(\mathcal{G}_{tgt}^{(i)})\right)
\end{aligned}
\end{equation*}
where $\texttt{UNIF}_{\textit{spatial}}$ and $\texttt{UNIF}_{\textit{temporal}}$ represent the spatial divergence loss on GNN Layers and the temporal divergence loss on temporal graph representation after M1, respectively. 

Overall, the objective function is defined to minimize the dual-divergence GRL loss (for all sample graphs) and the node classification loss (for source sample graphs and the few labeled nodes in target sample graphs). We detail the optimization process for \fname, as delineated in the pseudo-code provided in Algorithm~\ref{Alg} in Appendix~\ref{sec:pseudo}.

\section{Experiments}\label{sec:exp}
In this section, we evaluate the performance of \fname\ on six benchmark datasets. \fname\ exhibits superior performances compared to various state-of-the-art baselines (Section~\ref{sec:effectiveness}). Moreover, we conduct ablation studies (Section~\ref{sec:ablation}) and sensitivity analysis (Section~\ref{sec:parameter}) to demonstrate the necessity of each module in \fname\ and the reliability of \fname\ in various parameter settings. 

\subsection{Experiment Setup}
\textbf{Datasets:}
We evaluate \fname\ on our benchmark which is composed of three real-world graphs, including two graphs extracted from Digital Bibliography \& Library Project: DBLP-3 and DBLP-5~\cite{fan2021gcn}, where nodes represent authors, edges represent the co-authorship between two linked nodes; and one graph generated from human connectome project: HCP~\cite{fan2021gcn}, where nodes represent cubes of brain tissue, edges represent that two linked cubes show similar degrees of activation. Each node of these three graphs is associated with one label only. 

Our benchmark follows these principles:
(1) Dynamic: data follows the settings of graph and label evolution.
(2) Transferability: there are existing works that have explored knowledge transfer across heterogeneous domains~\cite{moon2017completely, day2017survey}. Some simple guesses are that the two graphs may have structural similarities allowing knowledge transfer~\cite{zhu2021transfer} or the attention mechanism suppressing the performance drop with heterogeneity~\cite{moon2017completely}.
To explore the potential structural similarities of the three datasets, we employ EEE-plot~\cite{prakash2010eigenspokes}, which is a scatter plot of the first three singular vectors of the adjacency matrix. In Figure~\ref{fig:spokes}, we observe there are spokes observed on the EEE-plots of three datasets, associating with the presence of well-defined communities in graphs~\cite{prakash2010eigenspokes}. The results suggest a similarity in structure across the three datasets, providing insights into the possibility of knowledge transferability among them. Our experiments also prove the validity of positive knowledge transfer.
The details of our benchmark are summarized in Table~\ref{tab:datasets}.

\begin{table}[h]
\centering
\caption{Benchmark statistics.}
\setlength{\tabcolsep}{3pt}
\scalebox{0.85}{
\begin{tabular}{ccc|ccc}
\hline
  Benchmark & Source & Target & Benchmark & Source & Target \\ \hline \hline
  1 & DBLP-5 & DBLP-3 & 4 & HCP & DBLP-5 \\
  2 & HCP & DBLP-3 & 5 & DBLP-3 & HCP \\
  3 & DBLP-3 & DBLP-5 & 6 & DBLP-5 & HCP \\
  \hline \\
\end{tabular}
}
\scalebox{0.85}{
\begin{tabular}{cccccc}
\hline
  Dataset  & \#Nodes & \#Edges & \#Attributes & \#Classes & \#Timestamps \\ \hline \hline
  DBLP-3 & 4,257 & 23,540 & 100 &  3 & 10 \\
  DBLP-5 & 6,606 & 42,815 & 100 & 5 & 10 \\  
  HCP & 5,000 & 1,955,488 & 20 & 10 & 12 \\ 
  \hline \\
\end{tabular}
}
\label{tab:datasets}
\end{table}

\begin{figure}[h]
  \centering
  \includegraphics[width=1.0\linewidth]{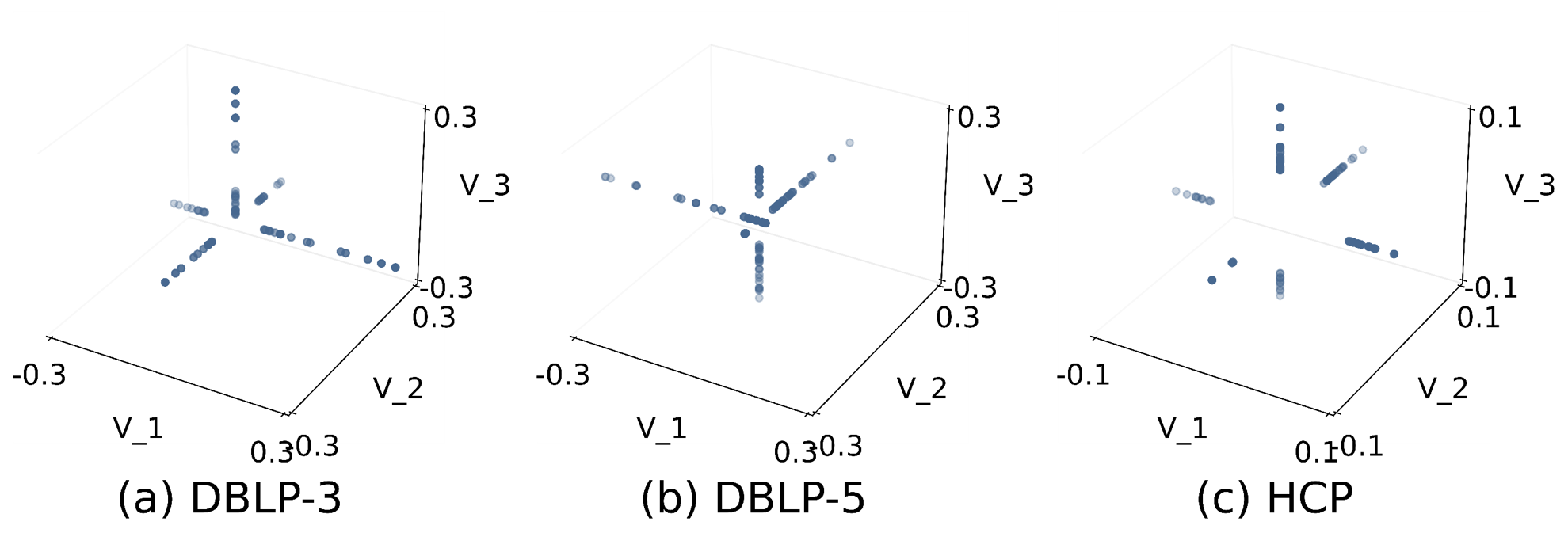}
  \caption{The EEE-plots of temporal graphs in Table~\ref{tab:datasets}.}
  \label{fig:spokes}
\end{figure}

\noindent\textbf{Baselines:} We compare \fname\ with four classical graph neural networks, four temporal graph neural networks and two graph
transfer learning methods.
\begin{desclist}[topsep=-1mm, itemsep=-1pt]
\item \underline{Classical GNNs}: Graph Convolutional Network ({GCN}, \citet{kipf17Semi}), Graph Attention Network ({GAT}, \citet{Petar18Graph}), Graph Isomorphism Network ({GIN}, \citet{Xu2019how}), Graph SAmple and aggreGatE ({GraphSAGE}, \citet{Hamilton17Advances}) are four standard graph representation benchmark architectures.
\item \underline{Temporal GNNs}: Diffusion Convolutional Recurrent Neural Network ({DCRNN}, \citet{li2017diffusion}) captures both spatial and temporal dependencies of graphs among time series. Dynamic Graph Encoder ({DyGrEncoder}, \citet{taheri2019predictive}) models embedding GNN to LSTM. Evolving Graph Convolutional Network ({EvolveGCN}, \citet{Pareja20EvolveGCN}) uses a GCN evolved by a Recurrent Neural Network (RNN) to capture the dynamism of graph sequence. Temporal Graph Convolutional
Network ({TGCN}, \citet{zhao2019t}) is a combination of GCN and the gated recurrent unit. 
\item \underline{Transfer Learning Methods}: Domain-Adversarial Neural Networks ({DANN}, \citet{Ganin16Domain}) is the first method using GRL for domain adaptation. Unsupervised Domain Adaptive Graph Convolutional Network ({UDAGCN}, \citet{wu2020unsupervised}) is a method for domain adaptation in the static graph using the attention mechanism. GRaph ADaptive Network ({GRADE}, \citet{wu2023non}) is a method for cross-network knowledge transfer from the perspective of the Weisfeiler-Lehman graph isomorphism test.
\end{desclist}

 The implementation details of the methods are provided in Appendix~\ref{sec:implement}.

\subsection{Effectiveness}\label{sec:effectiveness}

\begin{table*}[t]
\setlength{\tabcolsep}{3pt}
\centering
\caption{Comparison of different methods in node classification task using 5 labeled samples per class (area under the curve, AUC). The first four models are Classical GNN models and the next four are Temporal GNNs, we show their fine-tuned results on the target domain. The remaining three models are for transfer learning. We show results of knowledge transfer from source to target domain.}
\scalebox{0.83}{
\begin{tabular}{c|cccc|cccc|ccc|c}
\hline
 & \multicolumn{4}{c|}{Classical GNNs} & \multicolumn{4}{c|}{Temporal GNNs} & \multicolumn{3}{c|}{Transfer learning} & Ours \\
 & GCN & GAT & GIN & GraphSAGE &DCRNN & DyGrEncoder & EvolveGCN & TGCN & DANN & UDAGCN & GRADE & \fname \\ \hline
\hline
Benchmark 1 & \multirow{2}{*}{0.5609} & \multirow{2}{*}{0.5489} & \multirow{2}{*}{0.5454} & \multirow{2}{*}{0.5452} & \multirow{2}{*}{0.5637}  & \multirow{2}{*}{0.5672}    &  \multirow{2}{*}{0.5823}   &   \multirow{2}{*}{0.5640}  &   0.5416  &     0.5688   &  0.5246  & \multicolumn{1}{|c}{\textbf{0.6527}} \\
Benchmark 2 &  &  &  &  &  &  &  &  & 0.5400   &    0.5523  & 0.5223  & \multicolumn{1}{|c}{\textbf{0.6103}} \\ \hline
Benchmark 3 & \multirow{2}{*}{0.5404} & \multirow{2}{*}{0.5387} & \multirow{2}{*}{0.5422} & \multirow{2}{*}{0.5390} & \multirow{2}{*}{0.5518} & \multirow{2}{*}{0.5489}     &   \multirow{2}{*}{0.5610}  &  \multirow{2}{*}{0.5482}   &  0.5395     &    0.5660   &   0.5295    & \multicolumn{1}{|c}{\textbf{0.5915}} \\
Benchmark 4 &   &   &    &    &    &    &    &    &   0.5348    &    0.5651  &  0.5354   & \multicolumn{1}{|c}{\textbf{0.5769}}          \\ \hline
Benchmark 5 & \multirow{2}{*}{0.6756} & \multirow{2}{*}{0.6964} & \multirow{2}{*}{0.6962} & \multirow{2}{*}{0.6798} & \multirow{2}{*}{0.5710} & \multirow{2}{*}{0.6363}   &  \multirow{2}{*}{0.5679}  & \multirow{2}{*}{0.5695}    &   0.6977   &  0.7407  & 0.5170   & \multicolumn{1}{|c}{\textbf{0.7975}} \\
Benchmark 6 &   &   &   &   &   &   &   &   &    0.6981  &    0.7320   & 0.5154  & \multicolumn{1}{|c}{\textbf{0.8046}}          \\ \hline
\end{tabular}
}
\label{tab:result}
\end{table*}

\begin{table}[ht]
\setlength{\tabcolsep}{3pt}
\centering
\caption{Ablation study (AUC).}
\scalebox{0.85}{
\begin{tabular}{lccccc}
\hline \textbf{Ablation} & Benchmark 1  & Benchmark 5 & Benchmark 6\\
\hline w/o pre-training & $0.5907$ & $0.7661$& $0.7234$\\
w/o module 1  & $0.6487$ & $0.7682$ & $0.7303$ \\
w/o $\texttt{UNIF}_{\textit{spatial}}$ & $0.6367$ & $0.7939$ & $0.7985$\\
w/o $\texttt{UNIF}_{\textit{temporal}}$ & $0.6341$ & $0.7966$ & $0.8021$\\
\hline
\fname\ & $\textbf{0.6527}$ & $\textbf{0.7975}$ &  $\textbf{0.8046}$ &\\
\hline
\end{tabular}
}
\label{tab:ablation}
\end{table}

We compare \fname\ with eleven baseline methods across three real-world undirected graphs. We report the AUC of different methods on the last timestamp of the target domain in Table~\ref{tab:result}. In general, we have the following observations: 
(1) \fname\ consistently outperforms all eleven baselines on all the datasets, which demonstrates the effectiveness and generalizability of our model. Especially when adapting knowledge from DBLP-5 to DBLP-3 with five labeled samples per class, the improvement is 12.1\% compared with the second-best model (EvolveGCN). 
(2) Classical GNNs have the worst performance on four benchmarks (1, 2, 3, 4) since they can neither learn knowledge from the previous timestamps nor transfer knowledge from other domains. \fname\ boosts the performance compared with classical GNNs by up to 16.4\% (on benchmark 1).
(3) Temporal GNNs achieve second-place performance on Benchmarks 1 and 2, which means in these benchmarks, there is knowledge existing in the previous timestamps that is useful for the label prediction task in the future timestamps. Particularly, \fname\ still outperforms these temporal GNNs on Benchmarks 1 and 2 by up to 12.1\%. Notably, on Benchmarks 5 and 6, all temporal GNNs fail, while \fname\ can still has the highest performance.
(4) Transfer learning models have the second place performance on Benchmarks 5 and 6, which shows the efficacy of the domain knowledge transfer on these two benchmarks. Especially, \fname\ still does better than this kind of model on Benchmarks 5 and 6 by up to 9.9\% AUC.

\subsection{Ablation Study}\label{sec:ablation}
Considering that \fname\ consists of various components, we set up the following experiments to study the effect of different components by removing one component from \fname\ at a time: (1) removing the pre-training process; (2) removing module 1,
multi-resolution temporal encoding and attention; (3) removing module 2, the dual-divergence losses (including $\texttt{UNIF}_{\textit{spatial}}$ and $\texttt{UNIF}_{\textit{temporal}}$).
Due to the space limit, we use Benchmark 1, 5, and 6 to illustrate in this section. From Table~\ref{tab:ablation}, we have several interesting observations:
(1) Pre-training can significantly boost the model performance by up to 11.2\% (on Benchmark 6), which indicates the efficacy of knowledge transferring of our model across different graphs under the limited label setting.
(2) Module 1 achieves impressive improvement on Benchmark 5 and 6 by up to 10.2\%, which shows its strength in temporal transfer learning and also supports our theoretical analysis in Section~\ref{theory}.
(3) Both dual-divergence losses help the model better adapt knowledge from the source to the target domain. especially on Benchmark 1, the removal of $\texttt{UNIF}_{\textit{spatial}}$ ($\texttt{UNIF}_{\textit{temporal}}$) leads to a decrease in AUC by 1.6\% (1.8\%), p-value $<$ 0.001. This proves the effectiveness of dual GRLs module in alleviating the spatial and temporal divergences. 
(4) The improvements of M2 are not obvious in Benchmarks 5 and 6, and a simple guess is \fname\ variation with only M1 already achieves significant improvement than our baselines, so M2 makes less contribution to the final results.

\subsection{Parameter Sensitivity Analysis}\label{sec:parameter}
\begin{figure}[h]
\centering
\includegraphics[width=0.4\textwidth]{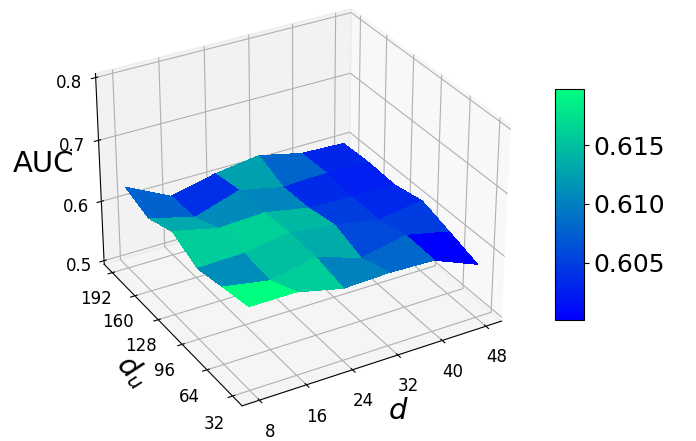}
\caption{Hyper-parameter analysis on Benchmark 2 with respect to $d_{u}$ and $d$.}
\label{fig:paramter}
\end{figure}
In this section, we study two hyper-parameters of our model: (1) the size of head dimension $d$ in M1 (modeling domain evolution via multi-resolution temporal encoding); (2) the size of the mapped features $d_{u}$ of two MLPs in M2 (domain-invariant learning via dual-divergence unification). The result is shown in Figure~\ref{fig:paramter}. Based on that, the fluctuation of the AUC (z-axis)
is less than 3\%. The AUC is slightly lower when the head dimension of module 1 becomes larger, and different values of $d$ do not affect the AUC significantly. Overall, we find \fname\ is reliable and not sensitive to the hyperparameters under study within a wide range.

\section{Related Work}\label{sec:relatedWork}
In this section, we briefly review the existing literature in the context of transfer learning and graph neural networks.  

\noindent\textbf{Transfer learning} has exhibited excellent performance in several areas, such as natural language processing~\cite{Yang21Enhancing, Ruder19Transfer}, computer vision~\cite{Zhang22Segmenting, Alhashim18High}, time series analysis~\cite{Bethge22Domain, Fawaz18Transfer}, and healthcare~\cite{panagopoulos21Transfer}. Then, several works named “continuous transfer~\cite{Wang21AFECAF, wang20transfer, Desai20Imitation}” or “dynamic domain adaptation~\cite{li19dynamic, Ke21Advances, mancini19adagraph}” are proposed to learn the evolving data. For example, \citet{Minku19Transfer} manually partitioned source data into several evolving parts and managed to solve the non-stationary source domain by performing transfer learning. There are also some works~\cite{Hoffman14Continuous, OrtizJimnez19CDOT, Liu20Learning, Wu20Continuous, Wang20Continuously, Kumar20Understanding, xie2021transfer} that addressed the scenario in which the source domain is static, and the target domain is continually evolving. Recently, \citet{Wu22Unified} modeled the knowledge transferability with dynamic source domain and dynamic target domain and defined this problem as “dynamic transfer learning.” Despite the success of dynamic transfer learning, no effort has been made to solve the problem of graph-structured data. In this paper, we aim to explore the knowledge transferability across graphs.

\noindent\textbf{Graph neural networks} capture the structure of graphs via message passing between nodes. Many significant efforts such as GCN~\cite{kipf17Semi}, GraphSAGE~\cite{Hamilton17Advances}, GAT~\cite{Petar18Graph}, GIN~\cite{Xu2019how} arose and have become indispensable baseline in a wide range of downstream tasks. Here we do not intend to provide a comprehensive survey of the wide range of GNNs. Instead, we refer the reader to excellent recent surveys to get more familiar with the topics~\cite{Wu21Comprehensive, Zhou20Graph}. Recently, several attempts have been focused on generalizing GNN from static graphs to dynamic graphs~\cite{Yu18Spatio, Michael18Modeling, Rossi2020tgn, Skarding21Foundations, Kim22DyGRAIN, You22ROLAND, Cong2023do, Yu2023Towards}. 
Specifically, \citet{Pareja20EvolveGCN} utilize common GCNs to learn node representations on each static graph snapshot and then aggregate these representations from the temporal dimension. While~\citet{Xu20Inductive} first propose to use time embedding and design a temporal graph attention layer to concatenate node, edge, and time features efficiently. 
However, the Dynamic GNN strategies often lack the capability of transferring knowledge, thus limiting their ability to leverage valuable information from other data sources. Here we further extend it to the transfer learning setting with dynamic source and target domains.

\section{Conclusion}\label{sec:conclusion}
In this paper, we investigate a novel problem named dynamic non-IID transfer learning on graphs, which intends to augment knowledge transfer from dynamic source graphs to dynamic target graphs. 
We shed light on C1 (Generalization bound) by proposing a new generalized bound in terms of historical empirical error and domain discrepancy. 
We also present \fname, an end-to-end framework with two major modules: M1. modeling domain evolution via multi-resolution temporal encoding and M2. domain-invariant learning via dual-divergence unification to alleviate evolving domain discrepancy that is specified in C2 (Computational framework). 
Extensive experiments on our carefully prepared benchmark, where \fname\ consistently outperforms leading baselines, demonstrate the efficacy of our model for dynamic non-IID transfer learning on graphs.

{\bf Reproducibility:} We have released our code and data at \url{https://github.com/wanghh7/EvoluNet}.

\nocite{langley00}

\section*{Acknowledgements}
We thank the anonymous reviewers for their constructive comments. This work is supported by 
4-VA, Cisco, Commonwealth Cyber Initiative, DARPA under the contract No. HR00112490370, Deloitte \& Touche LLP, DHS CINA, the National Science Foundation under Award No. IIS-2339989, and Virginia Tech. The views and conclusions are those of the authors and should not be interpreted as representing the official policies of the funding agencies or the government.

\section*{Impact Statement}
This paper presents work whose goal is to advance the field of Machine Learning. There are many potential societal consequences of our work, none of which we feel must be specifically highlighted here.

\bibliography{example_paper}
\bibliographystyle{icml2024}

\newpage
\appendix
\onecolumn

\section{Algorithm Analysis}\label{sec:proof}
First, we have the following assumptions from the previous work.
\begin{assumption}[$R$-Lipschitz Classifier~\cite{Wang22Understanding}]
\label{asmp:R-Lip}
Assume each classifier $h \in \mathcal{H}$ is $R$-Lipschitz in $\ell_{2}$ norm, \ie, $\forall \mathbf{x}, \mathbf{x}^{\prime} \in \mathcal{X}$,
\begin{equation*}
\left|h(\mathbf{x})-h\left(\mathbf{x}^{\prime}\right)\right| \leq R\left\|\mathbf{x}-\mathbf{x}^{\prime}\right\|_{2}.
\end{equation*}
\end{assumption}

\begin{assumption}[$\rho$-Lipschitz Loss~\cite{Wang22Understanding}]
\label{asmp:rho-Lip}
Assume the loss function $\mathcal{L}(\cdot, \cdot)$ is $\rho$-Lipschitz if $\exists~\rho>0$ such that $\forall \mathbf{x} \in \mathcal{X}$, $y, y^{\prime} \in \mathcal{Y}$ and $h, h^{\prime} \in \mathcal{H}$, the following inequalities hold:
\begin{equation*}
\begin{aligned}
    \left|\mathcal{L}\left(h^{\prime}(\mathbf{x}), y\right)-\mathcal{L}(h(\mathbf{x}), y)\right| &\leq \rho\left|h^{\prime}(\mathbf{x})-h(\mathbf{x})\right|, \\
    \left|\mathcal{L}\left(h(\mathbf{x}), y^{\prime}\right)-\mathcal{L}(h(\mathbf{x}), y)\right| &\leq \rho\left|y^{\prime}-y\right|.
\end{aligned}
\end{equation*}
\end{assumption}

\begin{assumption}[Bounded Model Complexity~\cite{Wang22Understanding, Kumar20Understanding,liang2016Statistical}]
\label{asmp:comp}
Assume the Rademachor complexity $\tilde{\Re}$ of the hypothesis class $\mathcal{H}$ is bounded, \ie, for some constant $B>0$,
$$
\tilde{\Re}(\mathcal{H})=\mathbb{E}\left[\sup _{h \in \mathcal{H}} \frac{1}{n} \sum_{i=1}^n \sigma_i h\left(\mathbf{x}_i\right)\right] \leq \frac{B}{\sqrt{n}}
$$
where $\sigma_i \sim \operatorname{Uniform}(\{-1,1\})$ for $i=1, \ldots, n$.
\end{assumption}

Next, we give the definition of dynamic Wasserstein distance on graphs, Wasserstein distance between domains, Weisfeiler-Lehman subtree, graph discrepancy, and Rademacher Complexity of hypothesis class.
\wassersteinG*

\begin{definition}[$p$-Wasserstein Distance~\cite{villani09optimal}]
\label{def:Wasserstein}
Consider two domains $\mathcal{D}_{\mu}$ and $\mathcal{D}_{\nu}$. For any $p\geq 1$, their $p$-Wasserstein distance metric is defined as:
\begin{equation*}
    W_{p}(\mathcal{D}_{\mu}, \mathcal{D}_{\nu})=\left(\inf_{\gamma\in\Gamma(\mathcal{D}_{\mu}, \mathcal{D}_{\nu})} \int d(x, y)^{p} \mathrm{~d} \gamma(x, y)\right)^{1/p},
\end{equation*}
where $\Gamma(\mathcal{D}_{\mu}, \mathcal{D}_{\nu})$ is the set of all measures over $\mathcal{D}_{\mu}\times\mathcal{D}_{\nu}$.
\end{definition}

\begin{definition}[Weisfeiler-Lehman subtree~\cite{Shervashidze11Weisfeiler}]
\label{def:WLsubtree}
Given a graph $\mathcal{G} = (\mathcal{V}, \mathcal{E})$, the Weisfeiler-Lehman subtree of depth $m$ rooted at $\mathbf{v} \in \mathcal{V}$ can be defined as:
\begin{equation*}
    f_m(\mathbf{v}) = f_m \left( f_{m-1}(\mathbf{v}); \cup_{\mathbf{u}\in\mathcal{N} (\mathbf{v})}f_{m-1}(\mathbf{u})\right),
\end{equation*}
where $f_0(\mathbf{v})$ is the initial node attributes for node $\mathbf{v}$, $f_i, i=1,\cdots, m,\cdots$ denotes the labeling function, $\mathcal{N} (\mathbf{v})$ denotes the neighbors of node $\mathbf{v}$.
\end{definition}

\begin{definition}[Graph Discrepancy~\cite{wu2023non}]
\label{def:GraphDis}
Given two graphs $\mathcal{G}_{\mu}$ and $\mathcal{G}_{\nu}$, the graph discrepancy between the two graphs can be represented as:
\begin{equation*}
    d_{\text{GSD}}(\mathcal{G}_{\mu}, \mathcal{G}_{\nu})=\lim _{M \rightarrow \infty} \frac{1}{M+1} \sum_{m=0}^M d_b(\mathcal{G}_{\mu}^m, \mathcal{G}_{\nu}^m),
\end{equation*}
where $\mathcal{G}^m$ is the Weisfeiler-Lehman subgraph at depth $m$ for an input graph $\mathcal{G}$, $d_b(\cdot,\cdot)$ is the base domain discrepancy, here we use the $p$-Wasserstein distance metric $W_{p}$.
\end{definition}

\begin{definition}[Rademacher Complexity~\cite{Bartlett02Rademacher}]
\label{def:Rademacher}
Given a sample $S= (\mathbf{X}_1, \cdots, \mathbf{X}_N) \in \mathcal{X}^{N}$, the empirical Rademacher complexity of $\mathcal{H}$ given $S$ is defined as:
\begin{equation*}
    \hat{\Re}_{S}(\mathcal{H})=\mathbb{E}_{\boldsymbol{\sigma}}\left[\sup _{h \in \mathcal{H}} \sum_{i=1}^{N} \sigma_{i} h\left(\mathbf{x}_{i}\right) \mid S=\left(\mathbf{x}_{1}, \cdots, \mathbf{x}_{N}\right)\right],
\end{equation*}
where $\boldsymbol{\sigma}=\left(\sigma_{1}, \cdots, \sigma_{m}\right)$ is a vector of independent random variables from the Rademacher distribution.
\end{definition}

Then, we use Lemma~\ref{lemma:shift domains} to bound the error difference between arbitrary two domains and use Lemma~\ref{lemma:stability} to bound the difference between empirical and expected errors.
\shiftDomains*

\stability*
The proof of Lemma~\ref{lemma:stability} can be found in the proof of Proposition 1 of the proof of Wang \etal~\cite{Wang22Understanding} and Lemma A.1 of Kumar \etal~\cite{Kumar20Understanding}.

\begin{lemma}[McDiarmid's inequality]
\label{lemma:McDiarmid}
Let function $f$ satisfies for all $1\leq i\leq N$, and all $\mathbf{X}_{1}, \cdots, \mathbf{X}_{N}, \mathbf{X}_{i}^{\prime} \in \mathcal{X}$,
\begin{equation}
\left|f\left(\mathbf{X}_{1}, \cdots, \mathbf{X}_{i}, \cdots, \mathbf{X}_{N}\right)-f\left(\mathbf{X}_{1}, \cdots, \mathbf{X}_{i}^{\prime}, \cdots, \mathbf{X}_{N}\right)\right| \leq c_{i},
\end{equation}
where bound $c_1, \cdots, c_N$ are constants. Then, for any  $\epsilon>0$,
\begin{equation}
\operatorname{Pr}[f-\mathbb{E}[f] \geq \epsilon] \leq \exp \left(\frac{-2 \epsilon^{2}}{\sum_{i=1}^{N} c_{i}^{2}}\right).
\end{equation}
\end{lemma}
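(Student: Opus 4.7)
The plan is to prove Lemma~\ref{lemma:McDiarmid} by the standard Doob martingale argument coupled with a Hoeffding-type bound on bounded martingale differences, followed by a Chernoff optimization. First I would fix an arbitrary $s>0$ and write $f-\mathbb{E}[f]$ as a telescoping sum of martingale differences. Concretely, define $V_0 = \mathbb{E}[f]$, $V_k = \mathbb{E}[f \mid \mathbf{X}_1,\dots,\mathbf{X}_k]$ for $1 \le k \le N$, and $D_k = V_k - V_{k-1}$. Then $V_N = f$ and $f - \mathbb{E}[f] = \sum_{k=1}^N D_k$, while $\{V_k\}$ is a martingale with respect to the filtration generated by $\mathbf{X}_1,\dots,\mathbf{X}_k$.

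The second step is to leverage the bounded-differences hypothesis to show each $D_k$ is bounded in an interval of length at most $c_k$, conditionally on the past. To do this I would introduce
\[
L_k = \operatorname*{ess\,inf}_{x}\mathbb{E}[f \mid \mathbf{X}_1,\dots,\mathbf{X}_{k-1},\mathbf{X}_k=x] - V_{k-1}, \qquad U_k = \operatorname*{ess\,sup}_{x}\mathbb{E}[f \mid \mathbf{X}_1,\dots,\mathbf{X}_{k-1},\mathbf{X}_k=x] - V_{k-1},
\]
so that $L_k \le D_k \le U_k$. Since swapping $\mathbf{X}_k$ to $\mathbf{X}_k'$ changes $f$ by at most $c_k$ pointwise in the remaining variables, one obtains $U_k - L_k \le c_k$ (by integrating the bounded-differences inequality over the independent coordinates $\mathbf{X}_{k+1},\dots,\mathbf{X}_N$). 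At this stage I would invoke Hoeffding's lemma on the conditionally bounded, mean-zero increment: $\mathbb{E}[\exp(s D_k) \mid \mathbf{X}_1,\dots,\mathbf{X}_{k-1}] \le \exp\!\bigl(s^2 c_k^2 / 8\bigr)$.

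Third, I would iterate by conditioning. Using the tower property and peeling off the increments one at a time,
\begin{equation*}
\mathbb{E}\bigl[\exp\bigl(s(f-\mathbb{E}[f])\bigr)\bigr] = \mathbb{E}\!\left[\exp\!\left(s\sum_{k=1}^{N-1}D_k\right)\mathbb{E}\bigl[\exp(sD_N)\mid \mathbf{X}_1,\dots,\mathbf{X}_{N-1}\bigr]\right] \le \exp\!\left(\frac{s^2}{8}\sum_{k=1}^N c_k^2\right).
\end{equation*}
Finally, apply the Chernoff-Markov bound $\Pr[f-\mathbb{E}[f]\ge\epsilon] \le \exp(-s\epsilon)\,\mathbb{E}[\exp(s(f-\mathbb{E}[f]))]$ and minimize the right side over $s>0$; the optimum is $s^\star = 4\epsilon/\sum_k c_k^2$, which yields the stated $\exp(-2\epsilon^2/\sum_k c_k^2)$ bound.

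The only delicate step is the second one: arguing carefully that the bounded-differences hypothesis transfers from a pointwise guarantee on $f$ to a conditional-range bound on $D_k$. This requires Fubini/independence to integrate out $\mathbf{X}_{k+1},\dots,\mathbf{X}_N$ under the identical product measure, and then taking essential supremum/infimum over the $\mathbf{X}_k$ coordinate. Once this step is in place, the remainder (Hoeffding's lemma plus Chernoff optimization) is routine.
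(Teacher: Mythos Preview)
Your proof is correct and follows the standard Doob-martingale/Azuma--Hoeffding route to McDiarmid's inequality. Note, however, that the paper does not supply its own proof of this lemma: it is stated as a classical result and then invoked in the proof of Theorem~\ref{THM:errorBound}, so there is no paper-side argument to compare against.
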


Based on the above conclusion, Theorem \ref{THM:errorBound} and its proof are given as follows.
\errorBound*

\begin{proof}
For the sake of simplicity here, we use $\mathcal{G}_{src}^{(i)}$ and $\mathcal{G}_{tgt}^{(i)}$ be the Weisfeiler-Lehman subgraphs of source domain and the target domain at $i^{th}$ timestamp, following the discussion of Wu \etal~\yrcite{wu2023non}, the representations can be considered as 
\textit{conditionally independent} with respect to Weisfeiler-Lehman subgraph. $\mathcal{B}\in(\mathcal{G} \times \mathcal{Y})^{\tilde{n}} $ is the measurable subset over $\mathcal{G}_{src}^{(1)}\times \cdots \times \mathcal{G}_{src}^{(T)}\times \mathcal{G}_{src}^{(1)}\times \cdots \times \mathcal{G}_{tgt}^{(T)}$, and we define a function $g$ over $\mathcal{B}$ as follows~\cite{Wu22Unified}:
\begin{equation}\label{equ:defG_B}
    g(\mathcal{B}) = \sup _{h \in \mathcal{H}} \epsilon_{tgt}^{(T+1)}(h)-\frac{1}{2T} \sum_{i = 1}^{T}\left(\hat{\epsilon}_{src}^{(i)}(h)+\hat{\epsilon}_{tgt}^{(i)}(h)\right),
\end{equation}
where $\hat{\epsilon}_{src}^{(i)}(h)=\frac{1}{N_{src}^{(i)}}\sum_{j=1}^{N_{src}^{(i)}}[\mathcal{L}(h(\mathbf{x}_j), y_j)]$ ($\mathbf{x}_j$ is the feature of $j^\text{th}$ sample in  $\mathbf{X}_{src}^{(i)}$) and $\hat{\epsilon}_{tgt}^{(i)}(h)=\frac{1}{N_{tgt}^{(i)}}\sum_{j=1}^{N_{tgt}^{(i)}}[\mathcal{L}(h(\mathbf{x}_j), y_j)]$ ($\mathbf{x}_j$ is the feature of $j^\text{th}$ sample in  $\mathbf{X}_{tgt}^{(i)}$) are the estimate errors on graph $\mathcal{G}_{src}^{(i)}$ and $\mathcal{G}_{tgt}^{(i)}$. Let $\mathcal{B}$ and $\mathcal{B}^{\prime}$ be two measurable subsets containing only one different source sample in $\mathcal{G}_{src}^{(i)}$, then we have 
\begin{equation*}
    \left|g(\mathcal{B})-g\left(\mathcal{B}^{\prime}\right)\right| \leq \frac{2 \rho}{2 N_{tgt}^{(i)} T} \leq \frac{\rho}{\tilde{n} T}.
\end{equation*}
The same result holds for different target samples. Based on McDiarmid's inequality (see Lemma~\ref{lemma:McDiarmid}), we have for any $\epsilon>0$,
\begin{equation*}
    \operatorname{Pr}\left[g(\mathcal{B})-\mathbb{E}_{\mathcal{B}}[g(\mathcal{B})] \geq \epsilon\right] \leq \exp \left(\frac{-2 \tilde{n} T^{2} \epsilon^{2}}{\rho^{2}}\right).
\end{equation*}
Then, for any $\delta>0$, with probability at least $1-\delta$, the following holds:
\begin{equation*}
    g(\mathcal{B}) \leq \mathbb{E}_{\mathcal{B}}[g(\mathcal{B})]+\frac{\rho}{T} \sqrt{\frac{\log \frac{1}{\delta}}{2 \tilde{n}}}.
\end{equation*}

In addition, Definition~\ref{def:GraphDis} gives a metric to measure the graph discrepancy based on $p$-Wasserstein distance $W_{p}$, so we can generalize Lemma~\ref{lemma:shift domains} to graphs. It bounds the population error difference of a classifier between a pair of shifted domains on graphs. For any $h \in \mathcal{H}$ and any $i \in \{1, \cdots, T\}$, we have
\begin{equation*}
\begin{aligned}
    \epsilon_{tgt}^{(i)}(h) &=\epsilon_{src}^{(i)}(h)+\epsilon_{tgt}^{(i)}(h)-\epsilon_{src}^{(i)}(h),\\
    & \leq \epsilon_{src}^{(i)} + \rho\sqrt{R^2+1}d_{\text{GSD}}(\mathcal{G}_{tgt}^{(i)}, \mathcal{G}_{src}^{(i)}).
\end{aligned} 
\end{equation*}
Similarly, we have
\begin{equation*}
\begin{aligned}
    \epsilon_{tgt}^{(T+1)}(h) &=\epsilon_{tgt}^{(i)}(h)+\epsilon_{tgt}^{(T+1)}(h)-\epsilon_{tgt}^{(i)}(h),\\
    & \leq \epsilon_{tgt}^{(i)} + \rho\sqrt{R^2+1}d_{\text{GSD}}(\mathcal{G}_{tgt}^{(T+1)}, \mathcal{G}_{tgt}^{(i)}).
\end{aligned} 
\end{equation*}
Then, we have
\begin{equation*}
\begin{aligned}
    &\sum_{i=1}^{T}\left(\epsilon_{tgt}^{(T+1)}(h)-\epsilon_{tgt}^{(i)}(h)\right)\\
    =&\epsilon_{tgt}^{(T+1)}(h)-\epsilon_{tgt}^{(T)}(h)+\cdots+\epsilon_{tgt}^{(2)}(h)-\epsilon_{tgt}^{(1)}(h)+\sum_{i=2}^{T}\left(\epsilon_{tgt}^{(T+1)}(h)-\epsilon_{tgt}^{(i)}(h)\right)\\
    \leq&\rho\sqrt{R^2+1}\left(d_{\text{GSD}}(\mathcal{G}_{tgt}^{(T)}, \mathcal{G}_{tgt}^{(T+1)})+\cdots+d_{\text{GSD}}(\mathcal{G}_{tgt}^{(1)}, \mathcal{G}_{tgt}^{(2)})\right)
    +\sum_{i=2}^{T}\left(\epsilon_{tgt}^{(T+1)}(h)-\epsilon_{tgt}^{(i)}(h)\right)\\
    \leq&T\tilde{W}_{p}+\sum_{i=2}^{T}\left(\epsilon_{tgt}^{(T+1)}(h)-\epsilon_{tgt}^{(i)}(h)\right)\leq\frac{T(T+1)}{2}\tilde{W}_{p}
\end{aligned}
\end{equation*}

Then
\begin{equation*}
\begin{aligned}
    &\mathbb{E}_{\mathcal{B}}[g(\mathcal{B})]\\
    =& \mathbb{E}_{\mathcal{B}}\left[\sup_{h \in \mathcal{H}}\epsilon_{tgt}^{(T+1)}(h)-\frac{1}{2 T} \sum_{i=1}^{T}\left(\hat{\epsilon}_{src}^{(i)}(h)+\hat{\epsilon}_{tgt}^{(i)}(h)\right)\right] \\
    =& \mathbb{E}_{\mathcal{B}}\left[\sup _{h \in \mathcal{H}} \epsilon_{tgt}^{(T+1)}(h)-\frac{1}{2T} \sum_{i=1}^{T}\left(\epsilon_{src}^{(i)}(h)+\epsilon_{tgt}^{(i)}(h)\right)+\frac{1}{2T} \sum_{i=1}^{T}\left(\epsilon_{src}^{(i)}(h)-\hat{\epsilon}_{src}^{(i)}(h)\right)+\frac{1}{2T} \sum_{i=1}^{T}\left(\epsilon_{tgt}^{(i)}(h)-\hat{\epsilon}_{tgt}^{(i)}(h)\right)\right] \\
    =& \frac{1}{2T} \sup _{h \in \mathcal{H}}\left(\sum_{i=1}^{T}\left(\epsilon_{tgt}^{(T+1)}(h)-\epsilon_{tgt}^{(i)}(h)\right)+\sum_{i=1}^{T}\left(\epsilon_{tgt}^{(T+1)}(h)-\epsilon_{src}^{(i)}(h)\right)\right) \\
    +&\mathbb{E}_{\mathcal{B}}\left[\sup _{h \in \mathcal{H}} \frac{1}{2T} \sum_{i=1}^{T}\left(\epsilon_{src}^{(i)}(h)-\hat{\epsilon}_{src}^{(i)}(h)\right)+\frac{1}{2T} \sum_{i=1}^{T}\left(\epsilon_{tgt}^{(i)}(h)-\hat{\epsilon}_{tgt}^{(i)}(h)\right)\right] \\
    \leq & \frac{1}{2T} \sup _{h \in \mathcal{H}}\left(\sum_{i=1}^{T}\left(\epsilon_{tgt}^{(T+1)}(h)-\epsilon_{tgt}^{(i)}(h)\right)+\sum_{i=1}^{T}\left(\epsilon_{tgt}^{(T+1)}(h)-\epsilon^{(i)}_{tgt}(h)\right)+\sum_{i=1}^{T}\left(\epsilon^{(i)}_{tgt}(h)-\epsilon^{(i)}_{src}(h)\right)\right) \\
    +&\mathbb{E}_{\mathcal{B}}\left[\frac{1}{2T} \sum_{i=1}^{T} \sup _{h \in \mathcal{H}}\left(\epsilon_{src}^{(i)}(h)-\hat{\epsilon}_{src}^{(i)}(h)\right)+\frac{1}{2T} \sum_{i=1}^{T} \sup _{h \in \mathcal{H}}\left(\epsilon_{tgt}^{(i)}(h)-\hat{\epsilon}_{tgt}^{(i)}(h)\right)\right]\\
    \leq&\frac{1}{2T}\left[\frac{T(T+1)}{2}\tilde{W}_{p}+\frac{T(T+1)}{2}\tilde{W}_{p}+T\tilde{W}_{p}\right]    +\mathbb{E}_{\mathcal{B}}\left[\frac{1}{2T}\sum_{i=1}^T\tilde{\Re}_{\mathcal{D}_{src}^{(i)}}(\mathcal{H}_{\mathcal{L}})+\frac{1}{2T}\sum_{i=1}^T\tilde{\Re}_{\mathcal{D}_{tgt}^{(i)}}(\mathcal{H}_{\mathcal{L}})\right]\\
    \leq&\frac{T+2}{2}\tilde{W}_{p}+\tilde{\Re}(\mathcal{H}_{\mathcal{L}}).
\end{aligned}
\end{equation*}

According to (\ref{equ:defG_B}), we have for any $h \in \mathcal{H}$,
\begin{equation}
\label{equ:boundSum}
    \epsilon_{tgt}^{(T+1)}(h) \leq \frac{1}{2T} \sum_{i=1}^{T}\left(\hat{\epsilon}_{src}^{(i)}(h)+\hat{\epsilon}_{tgt}^{(i)}(h)\right)+\mathbb{E}_{\mathcal{B}}[g(\mathcal{B})]+\frac{\rho}{T} \sqrt{\frac{\log \frac{1}{\delta}}{2 \tilde{n}}}.
\end{equation}
W.l.o.g., we assume $\hat{\epsilon}_{src}^{(1)}\leq\hat{\epsilon}_{src}^{(2)}\leq\cdots\leq\hat{\epsilon}_{src}^{(T)}$ for simplify. Consider the last term in $\sum_{i=1}^{T}\left(\hat{\epsilon}_{src}^{(i)}(h)\right)$, for some constant $B>0$,
\begin{equation}
\begin{aligned}
    (lemma~\ref{lemma:stability})\hat{\epsilon}_{src}^{(T)}\leq&\epsilon_{src}^{(T)}+\mathcal{O}\left(\frac{\rho B}{\sqrt{\tilde{n}}}+\sqrt{\frac{\log\frac{1}{\delta}}{\tilde{n}}}\right)\\
    (lemma~\ref{lemma:shift domains})\leq&\epsilon_{src}^{(T-1)}+\rho\sqrt{R^2+1}d_{\text{GSD}}(\mathcal{G}_{src}^{(T)}, \mathcal{G}_{src}^{(T-1)})
    +\mathcal{O}\left(\frac{\rho B}{\sqrt{\tilde{n}}}+\sqrt{\frac{\log\frac{1}{\delta}}{\tilde{n}}}\right)\\
    \leq&\cdots\\
    \leq&\epsilon_{src}^{(1)}+(T-1)\tilde{W}_{p}+\mathcal{O}\left(\frac{\rho B}{\sqrt{\tilde{n}}}+\sqrt{\frac{\log\frac{1}{\delta}}{\tilde{n}}}\right)\\
    (lemma~\ref{lemma:stability})\leq&\hat{\epsilon}_{src}^{(1)}+(T-1)\tilde{W}_{p}+\mathcal{O}\left(\frac{\rho B}{\sqrt{\tilde{n}}}+\sqrt{\frac{\log\frac{1}{\delta}}{\tilde{n}}}\right).
\end{aligned}
\label{equ:lastTerm}
\end{equation}
For the second last term in $\sum_{i=1}^{T}\left(\hat{\epsilon}_{src}^{(i)}(h)\right)$, we have
\begin{equation*}
\begin{aligned}
    (lemma~\ref{lemma:stability}) \hat{\epsilon}_{src}^{(T-1)}\leq&\epsilon_{src}^{(T-1)}+\mathcal{O}\left(\frac{\rho B}{\sqrt{\tilde{n}}}+\sqrt{\frac{\log (1 / \delta)}{\tilde{n}}}\right)\\
    \leq&\epsilon_{src}^{(T)}+\mathcal{O}\left(\frac{\rho B}{\sqrt{\tilde{n}}}+\sqrt{\frac{\log\frac{1}{\delta}}{\tilde{n}}}\right)\\
    (Eq.\ref{equ:lastTerm})\leq&\hat{\epsilon}_{src}^{(1)}+(T-1)\tilde{W}_{p}+\mathcal{O}\left(\frac{\rho B}{\sqrt{\tilde{n}}}+\sqrt{\frac{\log\frac{1}{\delta}}{\tilde{n}}}\right).
\end{aligned}
\end{equation*}
It is easy to see that this can be bounded for source or target across time. Generally,
\begin{equation*}
\begin{aligned}
    \frac{1}{2T}\sum_{i=1}^{T}\left(\hat{\epsilon}_{src}^{(i)}(h)\right)\leq\frac{1}{2}\min_{1\leq i\leq T}(\hat{\epsilon}_{src}^{(i)})+\frac{T-1}{2}\tilde{W}_{p}+\mathcal{O}\left(\frac{\rho B}{\sqrt{\tilde{n}}}+\sqrt{\frac{\log\frac{1}{\delta}}{\tilde{n}}}\right),\\
    \frac{1}{2T}\sum_{i=1}^{T}\left(\hat{\epsilon}_{tgt}^{(i)}(h)\right)\leq\frac{1}{2}\min_{1\leq i\leq T}(\hat{\epsilon}_{tgt}^{(i)})+\frac{T-1}{2}\tilde{W}_{p}+\mathcal{O}\left(\frac{\rho B}{\sqrt{\tilde{n}}}+\sqrt{\frac{\log\frac{1}{\delta}}{\tilde{n}}}\right).
\end{aligned}
\end{equation*}
Therefore, from (\ref{equ:boundSum}), we have
\begin{equation}
     \epsilon_{tgt}^{(T+1)}(h) \leq \frac{1}{2}\min_{1\leq i\leq T}\left(\hat{\epsilon}_{src}^{(i)}(h)+\hat{\epsilon}_{tgt}^{(i)}(h)\right)+\frac{3T}{2}\tilde{W}_{p}+\tilde{\Re}(\mathcal{H}_{\mathcal{L}})+\mathcal{O}\left(\frac{\rho B}{\sqrt{\tilde{n}}}+\sqrt{\frac{\log\frac{1}{\delta}}{\tilde{n}}}\right).
\end{equation}
which completes the proof.
\end{proof}

\section{Optimization and Pseudo Code}\label{sec:pseudo}
The goal of the training process is to minimize the dual-divergence GRL loss (for all sample graphs) and the node classification loss (for source sample graphs and the few labeled nodes in target sample graphs). The overall loss function can be written as follows:
\begin{equation}\label{equ:loss}
\mathcal{L}_{total}=\mathcal{L}_{\text{GRL}}+\gamma_1*\mathcal{L}_{task}
\end{equation}
where $\mathcal{L}_{GRL}$ represents the dual GRL loss, $\mathcal{L}_{task}$ represents the loss for classification on labeled nodes, and the hyperparameter $\gamma_1$ balances the contribution of the two terms. In the paper, we consider the node classification task, $\mathcal{L}_{task}$ is therefore defined as follows:
\begin{equation}
\begin{aligned}
    \mathcal{L}_{task}&=\mathcal{L}_{source}+\mathcal{L}_{target}\\
    &=\sum_{i=1}^{T}\mathcal{L}_{\text{CE}}\left(h(\mathcal{G}_{src}^{(i)}), \mathcal{Y}_{src}^{(i)}\right)+\gamma_2*\sum_{i=1}^{T+1}\mathcal{L}_{\text{CE}}\left(h(\tilde{\mathcal{G}}_{tgt}^{(i)}), \tilde{\mathcal{Y}}_{tgt}^{(i)}\right)
\end{aligned}    
\end{equation}
where $h(\cdot)$ is the classifier for the downstream task, $\mathcal{L}_{source}$ and $\mathcal{L}_{target}$ represent the node classification loss on the source and target domains, here we employ cross-entropy loss $\mathcal{L}_{\text{CE}}$, and the contribution of the two terms is balanced by $\gamma_2$. 

We provide the pseudo-code of \fname\ in Algorithm~\ref{Alg} and we employ Adam~\cite{Kingma15Adam} as the optimizer. Given a set of source sample graphs $\{\mathcal{G}_{src}^{(i)} = (\mathcal{V}_{src}^{(i)}, \mathcal{E}_{src}^{(i)})\}_{i=1}^T$ with rich label information $\{\mathcal{Y}_{src}^{(i)}\}_{i=1}^T$, and a set of target graphs $\{\mathcal{G}_{tgt}^{(i)} = (\mathcal{V}_{tgt}^{(i)}, \mathcal{E}_{tgt}^{(i)})\}_{i=1}^{T+1}$ with few label information $\{\tilde{\mathcal{Y}}_{tgt}^{(i)}\}_{i=1}^{T+1}$, our proposed \fname\ framework aims to predict $\hat{\mathcal{Y}}_{tgt}^{(T+1)}$ in the latest target sample graph $\mathcal{G}_{tgt}^{(T+1)}$. 
We initialize each of the models and the classifier in Step 1. Steps 2-7 correspond to the pre-train process: in Step 3, we map sample graphs from source and target domains to a shared latent space using two separate MLPs; then the mapped representations are passed to a domain-invariant GNN for computing domain-invariant spatial representations in Step 4; followed by a domain-invariant module 1 for computing domain-invariant temporal graph representations in Step 5; while in Step 6, models are trained by minimizing the objective function. In Steps 8-10, we fine-tune the MLP of the target domain, the domain-invariant GNN, the domain-invariant module 1, and the classifier $h(\cdot)$ on the latest target domain $\mathcal{G}_{tgt}^{(T+1)}$.

\begin{algorithm}[htbp]
\caption{The \fname\ Learning Framework.}
\label{Alg}
\begin{algorithmic}[1]
\REQUIRE ~~\\
    (i) a set of source sample graphs $\{\mathcal{G}_{src}^{(i)} = (\mathcal{V}_{src}^{(i)}, \mathcal{E}_{src}^{(i)})\}_{i=1}^T$ with rich label information $\{\mathcal{Y}_{src}^{(i)}\}_{i=1}^T$;
    (ii) a set of target sample graphs $\{\mathcal{G}_{tgt}^{(i)} = (\mathcal{V}_{tgt}^{(i)}, \mathcal{E}_{tgt}^{(i)})\}_{i=1}^{T+1}$ with few label information $\{\tilde{\mathcal{Y}}_{tgt}^{(i)}\}_{i=1}^{T+1}$.
\ENSURE ~~\\
    Prediction $\hat{\mathcal{Y}}_{tgt}^{(T+1)}$ of unlabeled examples in $\mathcal{G}_{tgt}^{(T+1)}$.\\
    \STATE\label{step:init} Initialize two MLPs for source and target, the domain-invariant GNN, the domain-invariant module 1, the dual-divergence unification module, and the classifier $h(\cdot)$ for the downstream task in $\mathcal{G}_{tgt}^{(T+1)}$.
    \WHILE{not converge}\label{step:startPretrain}
        \STATE\label{step:MLP} Compute representations in a shared latent space of both $\{\mathcal{G}_{src}^{(i)}\}_{i=1}^T$ and $\{\mathcal{G}_{tgt}^{(i)}\}_{i=1}^{T+1}$ via two MLPs.
        \STATE\label{step:GNN} Compute domain-invariant spatial representations of both $\{\mathcal{G}_{src}^{(i)}\}_{i=1}^T$ and $\{\mathcal{G}_{tgt}^{(i)}\}_{i=1}^{T+1}$ via the domain-invariant GNN and first GRL.
        \STATE\label{step:transformer} Compute domain-invariant temporal graph representations of both $\{\mathcal{G}_{src}^{(i)}\}_{i=1}^T$ and $\{\mathcal{G}_{tgt}^{(i)}\}_{i=1}^{T+1}$ via the domain-invariant module 1 and second GRL.
        \STATE\label{step:update} Update the hidden parameters of two MLPs, the GNN, module 1, and the dual-divergence unification module by minimizing the loss function in Eq.~\ref{equ:loss}.
    \ENDWHILE\label{step:endPretrain}
    \WHILE{not converge}\label{step:startFinetune}
        \STATE Fine-tune MLP for the target domain, the GNN, module 1, and the classifier $h(\cdot)$ for the downstream task.
    \ENDWHILE\label{step:endFinetune}
\end{algorithmic}
\end{algorithm}

\section{Implementation Details}\label{sec:implement}
We compare \fname\ with four classical graph neural networks GCN~\cite{kipf17Semi}, GAT~\cite{Petar18Graph}, GIN~\cite{Xu2019how}, GraphSAGE~\cite{Hamilton17Advances}; four temporal graph neural networks DCRNN~\cite{li2017diffusion}, DyGrEncoder~\cite{taheri2019predictive}, EvolveGCN~\cite{Pareja20EvolveGCN}, TGCN~\cite{zhao2019t}); and three graph transfer learning methods DANN~\cite{Ganin16Domain}, UDAGCN~\cite{wu2020unsupervised}, GRADE~\cite{wu2023non}).
For a fair comparison, the output dimensions of all GNNs including baselines and \fname\ are set to 16. We conduct experiments with only five labeled samples in each class of the target dataset and test model performance based on all the rest of the unlabeled nodes. For non-temporal GNNs, since they cannot process dynamic graphs directly, we train each model on the graph of the last timestamp. Specifically, for classical GNNs, they are trained on the target dataset
for 1000 epochs; for transfer learning models, after training on the source dataset for 2000 epochs, they are fine-tuned on the target dataset for 600 epochs. We use GCN as the feature extractor of DANN and follow the instructions from the original paper of UDAGCN~\cite{wu2020unsupervised} to build a union set for input features between the source and target domains by setting zeros for unshared features. The original code for GRADE does not support cross-domain transfer with different feature and class dimensions; we processed the features with a linear layer and constructed a joint label space.
For four temporal GNNs, they are trained using all timestamps of the target dataset for 1000 epochs.

For \fname, it is firstly pre-trained for 2000 epochs, then fine-tuned on the target dataset for 600 epochs using limited labeled data in each class. Since the label of each node in current benchmarks is consistent in every timestamp, in this paper, the output of module 1 in \fname\ is aggregated using the average function over all the timestamps, but our model can easily be applied to the settings where labels of each node are changed in different timestamps by simply removing the aggregation operation. We use Adam optimizer with learning rate 3e-3. Considering the imbalanced label distribution, the area under the receiver of the characteristic curve (AUC) is used as the evaluation metric. We run all the experiments with 25 random seeds. The experiments are performed on a Ubuntu20 machine with 16 3.8GHz AMD Cores and a single 24GB NVIDIA GeForce RTX3090.


\end{document}